\newtheorem{definition}{Definition}
\newtheorem{theorem}{Theorem}
\newtheorem{lemma}{Lemma}
\newtheorem{problem}{Problem}
\newtheorem{proposition}{Proposition}
\newtheorem{corollary}{Corollary}
\title{Exact Verification of ReLU Neural Control Barrier Functions}
\author{Hongchao Zhang\\
 Electrical \& Systems Engineering\\
    Washington University in St. Louis\\
    St. Louis, MO 63130\\
    \texttt{hongchao@wustl.edu}
    \And
    Junlin Wu \\
      Computer Science \& Engineering\\
    Washington University in St. Louis\\
    St. Louis, MO 63130\\
    \texttt{junlin.wu@wustl.edu}
    \And
    Yevgeniy Vorobeychik \\
      Computer Science \& Engineering\\
    Washington University in St. Louis\\
    St. Louis, MO 63130\\
    \texttt{yvorobeychik@wustl.edu}
    \And
    Andrew Clark \\
     Electrical \& Systems Engineering\\
    Washington University in St. Louis\\
    St. Louis, MO 63130\\
    \texttt{andrewclark@wustl.edu}
}
\begin{document}
\maketitle
\begin{abstract}
Control Barrier Functions (CBFs) are a popular approach for safe control of nonlinear systems. In CBF-based control, the desired safety properties of the system are mapped to nonnegativity of a CBF, and the control input is chosen to ensure that the CBF remains nonnegative for all time. Recently, machine learning methods that represent CBFs as neural networks (neural control barrier functions, or NCBFs) have shown great promise due to the universal representability of neural networks. However, verifying that a learned CBF guarantees safety remains a challenging research problem. This paper presents novel exact conditions and algorithms for verifying safety of feedforward NCBFs with ReLU activation functions. The key challenge in doing so is that, due to the piecewise linearity of the ReLU function, the NCBF will be nondifferentiable at certain points, thus invalidating traditional safety verification methods that assume a smooth barrier function. We resolve this issue by leveraging a generalization of Nagumo's theorem for proving invariance of sets with nonsmooth boundaries to derive necessary and sufficient conditions for safety. Based on this condition, we propose an algorithm for safety verification of NCBFs that first decomposes the NCBF into piecewise linear segments and then solves a nonlinear program to verify safety of each segment as well as the intersections of the linear segments. We  mitigate the complexity by only considering the boundary of the safe region and by pruning  the segments with Interval Bound Propagation (IBP) and linear relaxation. We evaluate our approach through numerical studies with comparison to state-of-the-art SMT-based methods. Our code is available at \url{https://github.com/HongchaoZhang-HZ/exactverif-reluncbf-nips23}. 
\end{abstract}

\section{Introduction}
Safety is a critical property for autonomous systems, including unmanned ground, aerial, and space vehicles \cite{agrawal2021safe} and robotic manipulators \cite{hsu2015control}. The importance of safety has motivated extensive research into verification and synthesis of safe control strategies \cite{xu2017correctness,breeden2021high, dai2022convex, kang2023verification,rober2023hybrid}. Control barrier function (CBF)-based algorithms \cite{ames2019control} ensure safety by  constructing a CBF that is nonnegative if the system is safe and synthesizing a control policy that ensures that the CBF is nonnegative for all time. Control barrier functions provide a high degree of flexibility, since any control policy that satisfies the control barrier function constraint is provably safe, and have been demonstrated in applications including robotic manipulation \cite{cortez2019control}, vehicle cruise control \cite{ames2014control} and space exploration \cite{breeden2021robust}. Recently, CBFs that are parametrized by neural networks (\emph{neural control barrier functions} or  NCBFs)~\cite{dawson2022safe,dawson2023safe,liu2023safe} have been proposed. 
This novel class of NCBFs is promising due to the universal representability of neural networks (enabling the encoding of complex safety constraints) and the efficiency of learning algorithms, with a comparison in Section \ref{sup:NCBF_polyCBF} exemplifying numerical difference. However, safety verification of NCBF-based control remains a challenging research problem.  

In this paper, we consider the problem of verifying safety of NCBF-based control for nonlinear continuous-time systems. We focus on NCBFs represented by feedforward neural networks with ReLU activation due to their fast convergence in training shown in Section \ref{sup:cmp_activation} and widespread use in the safe control literature~\cite{dawson2022safe,dawson2023safe}. The key challenge for this class of NCBFs is that most methodologies for safety verification are based on proving that the derivative of the barrier function is nonnegative at the boundary of the safe region, and hence the barrier function remains nonnegative for all time. Since the ReLU activation function is not continuously differentiable, this approach is inapplicable. We resolve this challenge and derive exact safety conditions for ReLU NCBFs  by leveraging a generalization of Nagumo's theorem for proving invariance of sets with nonsmooth boundaries. Based on these conditions, we propose a new class of NCBF-based control policies that do not require the control policy and NCBF to be jointly trained, thus improving flexibility and tunability of the controller design. Our safety conditions also incorporate linear constraints on the control input, which may arise due to limits on actuation.

We  propose an algorithm to verify that an NCBF satisfies our derived safety conditions, implying that any control policy that is constrained by the NCBF will be safe. Our approach first decomposes the NCBF into piecewise linear segments. In order to mitigate the complexity of this stage, we show that it suffices to consider linear segments at the boundary of the safe region, and further over-approximate the segments using Interval Bound Propagation and linear relaxation. After decomposing the NCBF, we verify safety by solving a set of nonlinear programs, which check the safety criteria on each linear segment as well as at intersections of the segments. We evaluate our approach through numerical studies, in which we verify NCBFs with our proposed approach and compare to state-of-the-art Satisfiability Modulo Theory (SMT) based methods.

\textbf{Related Work}
Energy-based methods have been proposed to guarantee safety by ensuring that a particular energy function remains nonnegative. Barrier certificates for safe control were first proposed in \cite{prajna2007framework}. More recently, CBFs have emerged as promising approaches to safe control, due to their compatibility with a wide variety of control laws \cite{ames2019control,xiao2019control, clark2021verification,dai2022convex, liu2023safe, dawson2022safe,agrawal2021safe}. 
Sum-of-squares (SOS) optimization and other techniques derived from algebraic geometry have been widely used for safety verification of polynomial barrier functions \cite{prajna2007framework,ames2019control,zhao2022safety,schneeberger2023sos,clark2021verification}. 
However, SOS-based approaches for polynomial CBFs cannot be applied directly to NCBF verification, since activation functions used in neural networks are not polynomial and may be non-differentiable.

Neural barrier certificate~\cite{berkenkamp2017safe,abate2021fossil,qin2021learning} and NCBFs~\cite{dawson2022safe,dawson2023safe,liu2023safe} have been proposed to describe complex safety constraints that cannot be encoded polynomials. 
Current work, including SMT-based methods~\cite{zhao2020synthesizing,abate2021fossil,abate2020formal} and mixed integer programs \cite{zhao2022verifying}, verify safety by constructing a nominal control policy and proving that it satisfies the NCBF constraints. However, as we show in Section \ref{sec:experiments}, the reliance on a particular control policy may lead to false negatives during safety verification.  
Another related body of work deals with the problem of verifying neural networks including SMT~\cite{scheibler2013recent}, output reachable set verification \cite{xiang2018output}, polynomial approximations of the barrier function~\cite{sha2021synthesizing}, verify input/output relationships \cite{ferrari2022complete,henriksen2021deepsplit,zhang2022general} and ReLU neural networks focused verification~\cite{katz2017reluplex,katz2019marabou}. These methods, however, are not directly applicable to the problem of NCBF verification, which requires joint consideration of the neural network and the underlying nonlinear system dynamics. Methods based on input/output relationship can verify by encode dynamics and control policy with neural networks. However, with approximating error introduced, these methods are not directly applicable for exact verification. 
Piecewise linear approximations of ReLU neural networks have been used to develop tractable safety verification algorithms using linear~\cite{mathiesen2022safety} and SOS programming ~\cite{mazouz2022safety}. These approaches leads to sound and incomplete verification algorithms and have only be applied for discrete-time systems, whereas the present paper proposes exact verification algorithms for continuous-time systems. 

\textbf{Organization} The remainder of the paper is organized as follows. Section \ref{sec:preliminaries} gives the system model and background on neural networks and notation. Section \ref{sec:safety_conditions} presents the problem formulation and exact conditions for safety. Section \ref{sec:verification} presents our approach to verifying that an NCBF satisfies the safety conditions. Section \ref{sec:experiments} contains simulation results. Section \ref{sec:conclusions} concludes the paper. 
\section{Model and Preliminaries}
\label{sec:preliminaries}
In this section, we first present the system model and definition of safety. We then give background and notations of feedforward neural networks.


\subsection{System Model and Safety Definition}
\label{subsec:modelandsafety}
We consider a continuous-time nonlinear control-affine system with state $x(t) \in \mathcal{X} \subseteq \mathbb{R}^{n}$, control input $u(t) \in \mathcal{U} \subseteq \mathbb{R}^{m}$, and dynamics
\begin{equation}
    \label{eq:dyn}
    \dot{x}(t) = f(x(t))+g(x(t))u(t),  
\end{equation}
where $f: \mathbb{R}^{n} \rightarrow \mathbb{R}^{n}$ and $g: \mathbb{R}^{n} \rightarrow \mathbb{R}^{n \times m}$ are known functions. A control policy  is a function $\mu: \mathbb{R}^{n} \rightarrow \mathcal{U}$ that maps a state $x$ to a control input $u$.

Safety of dynamical systems requires $x(t)$ to remain in a given region $\mathcal{C}$, which we denote as the safe region. We assume the safe region is given by $\mathcal{C} = \{x : h(x) \geq 0\}\subseteq \mathcal{X}$, for some function $h: \mathcal{X} \rightarrow \mathbb{R}$. Safety is related to the property of positive invariance, which we define as follows.

\begin{definition}
    \label{def:positive-invariance}
    A set $\mathcal{D} \subseteq \mathbb{R}^{n}$ is positive invariant under dynamics (\ref{eq:dyn}) and control policy $\mu$ if $x(0) \in \mathcal{D}$ and $u(t) = \mu(x(t)) \ \forall t \geq 0$ imply that $x(t) \in \mathcal{D}$ for all $t \geq 0$. 
\end{definition}

We define a control policy $\mu$ to be \emph{safe} if there is a set $\mathcal{D}$ such that (i) $\mathcal{D} \subseteq \mathcal{C}$ and (ii) $\mathcal{D}$ is positive invariant under dynamics (\ref{eq:dyn}) and control policy $\mu$. 


One approach to designing safe control policies is to choose a function $b: \mathbb{R}^{n} \rightarrow \mathbb{R}$, denoted as a \emph{Control Barrier Function (CBF)}, and let $\mathcal{D} = \{x : b(x) \geq 0\}$. In the case where $b$ is continuously differentiable, the following result can be used to guarantee positive invariance of $\mathcal{D}$.

\begin{theorem}[\cite{ames2019control}] 
\label{theorem:CBF-safety}
Suppose that $b$ is a CBF, $b(x(0)) \geq 0$, and $u(t)$ satisfies 
\begin{equation}
\label{eq:CBF-def}
\frac{\partial b}{\partial x}(f(x) + g(x)u) \geq -\alpha(b(x)).
\end{equation}
for all $t$, where $\alpha: \mathbb{R} \rightarrow \mathbb{R}$ is a strictly increasing function with $\alpha(0) = 0$. Then the set $\mathcal{D} = \{x : b(x) \geq 0\}$ is positive invariant.
\end{theorem}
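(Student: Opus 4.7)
The plan is to show positive invariance of $\mathcal{D}=\{x:b(x)\ge 0\}$ by a boundary-crossing contradiction argument driven by the CBF inequality (\ref{eq:CBF-def}). First, since $b$ is continuously differentiable and $x(t)$ evolves under (\ref{eq:dyn}) with $u(t)=\mu(x(t))$, the chain rule gives
\[
\frac{d}{dt} b(x(t)) \;=\; \frac{\partial b}{\partial x}(x(t))\bigl(f(x(t))+g(x(t))u(t)\bigr) \;\ge\; -\alpha\bigl(b(x(t))\bigr)
\]
for every $t\ge 0$, so $v(t):=b(x(t))$ is absolutely continuous and satisfies the differential inequality $\dot v(t)\ge -\alpha(v(t))$ with $v(0)\ge 0$.

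Next I would argue by contradiction that $v(t)\ge 0$ for all $t\ge 0$. Suppose there exists $t_1>0$ with $v(t_1)<0$. By continuity of $v$ and the assumption $v(0)\ge 0$, the set $\{t\in[0,t_1]:v(t)\ge 0\}$ is closed, so it has a maximum $t_0<t_1$; then $v(t_0)=0$ and $v(t)<0$ on $(t_0,t_1]$. Because $\alpha$ is strictly increasing with $\alpha(0)=0$, $v(t)<0$ forces $\alpha(v(t))<0$, hence $\dot v(t)\ge -\alpha(v(t))>0$ on $(t_0,t_1]$. Integrating from $t_0$ to any $t\in(t_0,t_1]$ yields $v(t)>v(t_0)=0$, which contradicts $v(t)<0$ on that interval. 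Therefore $v(t)\ge 0$, i.e.\ $x(t)\in\mathcal{D}$, for all $t\ge 0$, establishing positive invariance.

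The main obstacle is the delicate handling of the boundary case: at a first zero $t_0$ of $v$, the raw CBF inequality only says $\dot v(t_0)\ge 0$, which does not by itself prevent $v$ from dipping below zero immediately afterward. The contradiction above evades this subtlety by using the strict monotonicity of $\alpha$ to upgrade $\dot v\ge 0$ at the single crossing time to $\dot v>0$ on an entire subinterval where $v$ is hypothetically negative. A functionally equivalent route would be to invoke the scalar comparison lemma with the auxiliary ODE $\dot w=-\alpha(w)$, $w(0)=v(0)\ge 0$, and note that $w\equiv 0$ is a (sub)solution that traps $w$ in $[0,\infty)$, so $v(t)\ge w(t)\ge 0$; I would mention this as the standard alternative if the first argument needed tightening around regularity of $\alpha$.
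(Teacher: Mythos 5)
Your proof is correct. Note that the paper does not supply its own proof of this statement---it is quoted as background from the cited reference, and the supplementary material only proves the later lemmas---so there is nothing internal to compare against; your first-crossing contradiction (locating the last zero $t_0$ of $v(t)=b(x(t))$ before a hypothetical negative value and using strict monotonicity of $\alpha$ to force $\dot v>0$ wherever $v<0$) is the standard argument for this result and is essentially the comparison-lemma route you mention as an alternative, made self-contained. The only implicit hypotheses you rely on, namely that the trajectory exists for all $t\ge 0$ and that inequality (\ref{eq:CBF-def}) is assumed along the whole trajectory (including where $b(x(t))<0$), are consistent with how the theorem is stated.
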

Theorem \ref{theorem:CBF-safety} implies that, if $b$ is a CBF, then adding (\ref{eq:CBF-def}) as a constraint on the control at each time step suffices to guarantee safety. 


Recently, Neural Control Barrier Functions (NCBFs), in which the function $b$ is encoded by a neural network, have been proposed~\cite{dawson2022safe,dawson2023safe,liu2023safe}. The advantage of NCBFs arises from the universal representability of neural networks, which allows them to realize a wide variety of safety constraints.


\subsection{Neural Network Background and Notation}
\label{subsec:NN-background}
We give notations to describe a neural network (NN) with $L$ layers and $M_{i}$ neurons in the $i$-th layer. We let the input to the NN be denoted $x$, the output at the $j$-th neuron of the $i$-th layer be denoted $z_{ij}$, and the output of the network be denoted $y$. We let $\mathbf{z}_{i}$ denote the vector of neuron outputs at the $i$-th layer. The outputs are computed as
\begin{align}
    z_{ij} = \left\{
    \begin{array}{ll}
    \sigma(W_{ij}^{T}x + r_{ij}), & i=1 \\
    \sigma(W_{ij}^{T}\mathbf{z}_{i-1} + r_{ij}), & i \in \{2,\ldots,L-1\}
    \end{array}
    \right., \ \ 
    y = \Omega^{T}\mathbf{z}_{L} + \psi
\end{align}
where $\sigma: \mathbb{R} \rightarrow \mathbb{R}$ is the activation function. The input to the function $\sigma$ is the \emph{pre-activation input} to the neuron, and is given by $W_{1j}^{T}x + r_{1j}$ for the $j$-th neuron at the first layer and $W_{ij}^{T}\mathbf{z}_{i-1} + r_{ij}$ for the $j$-th neuron at the $i$-th hidden layer. $W_{ij}$ has dimensionality $n \times 1$ for $i=1$ and $M_{i-1} \times 1$ for $i > 1$. $W_{i}$ is an $n \times M_{i}$ matrix. In this paper, we assume that $\sigma$ is  the ReLU function $ReLU(z) = \max\{0,z\}$. The output of the network is given by $y = \Omega^{T}\mathbf{z}_{L} + \psi$, where $\Omega \in \mathbb{R}^{M_{L}}$ and $\psi \in \mathbb{R}$. 
The $j$-th neuron at the $i$-th layer is \emph{activated} by a particular input $x$ if its pre-activation input is nonnegative,  \emph{inactivated} if the pre-activation input is nonpositive, and \emph{unstable} if the pre-activation input is zero. 
A set of neurons $\mathbf{S} = (S_{1},\ldots,S_{L})$, with $S_{i} \subseteq \{1,\ldots,M_{i}\}$ denoting a subset of neurons at the $i$-th layer, is activated by $x$ if all of the neurons in $\mathbf{S}$ are activated by $x$ and all the neurons not in $\mathbf{S}$ are inactivated by $x$. A set of neurons $\mathbf{T} = (T_{1},\ldots,T_{L})$ with $T_{i} \subseteq \{1,\ldots,M_{i}\}$ is unstable by $x$ if all of the neurons in $\mathbf{T}$ are unstable by $x$.


For a given set $\mathbf{S}$, if $\mathbf{S}$ is activated by $x$, then the pre-activation input to each neuron and the overall output of the network are affine in $x$, with the affine mapping determined by $\mathbf{S}$ as follows. For the first layer, we define 
\begin{equation}
    \overline{W}_{1j}(\mathbf{S}) = \left\{
    \begin{array}{ll}
    W_{1j}, & j \in S_{1} \\
    0, & \mbox{else}
    \end{array}
    \right.
    \  \ 
    \overline{r}_{1j}(\mathbf{S}) = \left\{
    \begin{array}{ll}
    r_{1j}, & j \in S_{1} \\
    0, & \mbox{else}
    \end{array}
    \right.
\end{equation}
so that the output of the $j$-th neuron at the first layer is $\overline{W}_{1j}(\mathbf{S})^{T}x + \overline{r}_{1j}(\mathbf{S})$. We recursively define $\overline{W}_{ij}(\mathbf{S})$ and $\overline{r}_{ij}(\mathbf{S})$ by letting $\overline{\mathbf{W}}_{i}(\mathbf{S})$ be a matrix with columns $\overline{W}_{i1}(\mathbf{S}),\ldots,\overline{W}_{iM_{i}}(\mathbf{S})$ and 
\begin{equation}
\overline{W}_{ij}(\mathbf{S}) = \left\{
\begin{array}{ll}
\overline{\mathbf{W}}_{i-1}(\mathbf{S})W_{ij}, & j \in S_{i} \\
0, & \mbox{else}
\end{array}
\right. \ \ 
\overline{r}_{ij}(\mathbf{S}) = \left\{
\begin{array}{ll}
W_{ij}^{T}\overline{\mathbf{r}}_{i-1}(\mathbf{S}) + r_{ij}, & j \in S_{i}, \\
0, & \mbox{else}
\end{array}
\right.
\end{equation}
where $\overline{\mathbf{r}}_{i}(\mathbf{S})$ is the vector with elements $\overline{r}_{ij}(\mathbf{S})$, $j=1,\ldots,M_{i}$.

We define $\overline{W}(\mathbf{S}) = \overline{\mathbf{W}}_{L}(\mathbf{S})\Omega$ and $\overline{r}(\mathbf{S}) = \Omega^{T}\overline{\mathbf{r}}_{L}(\mathbf{S}) + \psi$. Based on these notations, when an input $x$ activates the set $\mathbf{S}$, $z_{ij} = \overline{W}_{ij}(\mathbf{S})^{T}x + \overline{r}_{ij}(\mathbf{S})$ and $y = \overline{W}(\mathbf{S})^{T}x + \overline{r}(\mathbf{S})$. 

\begin{lemma}
\label{lemma:activation-region}
Let $\overline{\mathcal{X}}(\mathbf{S})$ denote the set of inputs $x$ that activate a particular set of neurons $\mathbf{S}$, and let $\overline{\mathbf{W}}_{0}(\mathbf{S})$ be equal to the identity matrix, and $\overline{\mathbf{r}}_{0}$ to be zero vector. Then
\begin{multline}
\label{eq:polyhedron}
\overline{\mathcal{X}}(\mathbf{S}) =  
\bigcap_{i=1}^{L}{\left(\bigcap_{j \in S_{i}}{\{x : W_{ij}^{T}(\overline{\mathbf{W}}_{i-1}(\mathbf{S})^{T}x + \overline{\mathbf{r}}_{i-1}) + r_{ij} \geq 0\} }\right.} \\
\left.
\cap \bigcap_{j \notin S_{i}}{\{x : W_{ij}^{T}(\overline{\mathbf{W}}_{i-1}(\mathbf{S})^{T}x + \overline{\mathbf{r}}_{i-1}) + r_{ij} \leq 0\}}\right).
\end{multline}
\end{lemma}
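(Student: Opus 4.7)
The plan is to proceed by induction on the layer index $i$, since the recursive definitions of $\overline{\mathbf{W}}_{i}(\mathbf{S})$ and $\overline{\mathbf{r}}_{i}(\mathbf{S})$ are themselves recursive in $i$. The claim I want to establish by induction is the following: for any input $x$ that activates the restriction of $\mathbf{S}$ to layers $1,\ldots,i-1$, the vector of layer outputs satisfies $\mathbf{z}_{i-1} = \overline{\mathbf{W}}_{i-1}(\mathbf{S})^{T}x + \overline{\mathbf{r}}_{i-1}(\mathbf{S})$. Once this is known, the defining conditions ``$j \in S_{i}$ is activated'' and ``$j \notin S_{i}$ is inactivated'' translate directly into the half-space constraints appearing in (\ref{eq:polyhedron}).

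For the base case $i=1$, the pre-activation input to the $j$-th neuron is $W_{1j}^{T}x + r_{1j}$, which equals $W_{1j}^{T}(\overline{\mathbf{W}}_{0}(\mathbf{S})^{T}x + \overline{\mathbf{r}}_{0}) + r_{1j}$ since $\overline{\mathbf{W}}_{0}(\mathbf{S})=I$ and $\overline{\mathbf{r}}_{0}=0$. So $x$ activates $S_{1}$ exactly when the corresponding half-space conditions in (\ref{eq:polyhedron}) at level $i=1$ hold. Moreover, for $j \in S_{1}$, $z_{1j}=W_{1j}^{T}x+r_{1j}=\overline{W}_{1j}(\mathbf{S})^{T}x+\overline{r}_{1j}(\mathbf{S})$, and for $j \notin S_{1}$, $z_{1j}=0=\overline{W}_{1j}(\mathbf{S})^{T}x+\overline{r}_{1j}(\mathbf{S})$, so the claimed identity $\mathbf{z}_{1} = \overline{\mathbf{W}}_{1}(\mathbf{S})^{T}x + \overline{\mathbf{r}}_{1}(\mathbf{S})$ holds.

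For the inductive step, suppose $x$ activates $\mathbf{S}$ through layer $i-1$ and $\mathbf{z}_{i-1} = \overline{\mathbf{W}}_{i-1}(\mathbf{S})^{T}x + \overline{\mathbf{r}}_{i-1}(\mathbf{S})$. Then the pre-activation input at the $j$-th neuron of layer $i$ equals $W_{ij}^{T}(\overline{\mathbf{W}}_{i-1}(\mathbf{S})^{T}x + \overline{\mathbf{r}}_{i-1}(\mathbf{S})) + r_{ij}$, which is exactly the quantity appearing inside the set braces in (\ref{eq:polyhedron}). Hence $x$ activates $S_{i}$ iff, for each $j$, this affine expression is $\geq 0$ when $j \in S_{i}$ and $\leq 0$ when $j \notin S_{i}$. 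Under these conditions, $z_{ij}$ evaluates to the affine form $\overline{W}_{ij}(\mathbf{S})^{T}x + \overline{r}_{ij}(\mathbf{S})$ for $j \in S_{i}$ (by the recursive definition of $\overline{W}_{ij}$ and $\overline{r}_{ij}$) and to $0$ otherwise, advancing the inductive hypothesis to layer $i$.

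Taking the intersection over all $i \in \{1,\ldots,L\}$ and over activated/inactivated indices $j$ yields exactly the right-hand side of (\ref{eq:polyhedron}), and by the equivalence established at each inductive step this set coincides with $\overline{\mathcal{X}}(\mathbf{S})$. The main place to be careful is the recursive structure itself: when $j \notin S_{i-1}$, the inductive hypothesis gives $z_{i-1,j}=0$, matching the definitions $\overline{W}_{i-1,j}(\mathbf{S})=0$ and $\overline{r}_{i-1,j}(\mathbf{S})=0$, so the zeroed-out rows of $\overline{\mathbf{W}}_{i-1}(\mathbf{S})$ and $\overline{\mathbf{r}}_{i-1}(\mathbf{S})$ are precisely what make the layer-$i$ pre-activation expression correct. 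I do not anticipate any substantive obstacle beyond carefully tracking this bookkeeping through the recursion.
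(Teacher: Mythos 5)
Your proof is correct and follows essentially the same route as the paper: induction on the layer index, with the key step being that on the activation region the layer outputs $\mathbf{z}_{i-1}$ are the affine function $\overline{\mathbf{W}}_{i-1}(\mathbf{S})^{T}x + \overline{\mathbf{r}}_{i-1}(\mathbf{S})$ of the input, so the activation/inactivation conditions at layer $i$ become exactly the half-space constraints in (\ref{eq:polyhedron}). You merely make the inductive invariant on $\mathbf{z}_{i-1}$ explicit where the paper carries it implicitly through the expansion of $W_{Lj}^{T}z_{L-1}+r_{Lj}$.
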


A proof can be found in the supplementary material. 
Note that in (\ref{eq:polyhedron}), a particular input $x$ could belong to multiple activation regions $\overline{\mathcal{X}}(\mathbf{S})$. This is because if the $j$-th neuron at the $i$-th layer is unstable, then both $(S_{1},\ldots,S_{i} \cup \{j\},\ldots,S_{L})$ and $(S_{1},\ldots,S_{i} \setminus \{j\},\ldots,S_{L})$ can be regarded as activated by $x$. We let $\mathbf{S}(x) \triangleq \{\mathbf{S} : x \in \overline{\mathcal{X}}(\mathbf{S})\}$ and let $\mathbf{T}(x)$ denote the set of unstable neurons produced by input $x$. Given a collection of activation sets $\mathbf{S}_{1},\ldots,\mathbf{S}_{r}$, we let $$\mathbf{T}(\mathbf{S}_{1},\ldots,\mathbf{S}_{r}) = \left(\bigcup_{l=1}^{r}{\mathbf{S}_{l}}\right) \setminus \left(\bigcap_{l=1}^{r}{\mathbf{S}_{l}}\right).$$  The set $\mathbf{T}(\mathbf{S}_{1},\ldots,\mathbf{S}_{r})$ is equal to the set of neurons that must be unstable in order for an input $x$ to belong to $\overline{\mathcal{X}}(\mathbf{S}_{1}) \cap \ldots \cap \overline{\mathcal{X}}(\mathbf{S}_{r})$.


\section{Problem Formulation and Safety Conditions}
\label{sec:safety_conditions}
In this section, we first formally define the problem, and then give necessary and sufficient conditions for verifying NCBFs.

\subsection{Problem Formulation}
We consider a feedforward neural network (NN) $b: \mathcal{X} \rightarrow \mathbb{R}$ with ReLU activation function. Since $b$ is not continuously differentiable due to the piecewise linearity of the ReLU function, the guarantees of Theorem \ref{theorem:CBF-safety} cannot be applied directly. Our overall goal will be to derive analogous conditions to (\ref{eq:CBF-def}) for ReLU-NCBFs, and then ensure that any control policy $\mu$ that satisfies the conditions will be safe with $\mathcal{D} = \{x : b(x) \geq 0\}$.




\begin{problem}
\label{prob:NCBF_veri}
Given a nonlinear continuous-time system (\ref{eq:dyn}), a neural network function $b: \mathcal{X} \rightarrow \mathbb{R}$, a set of admissible control inputs $\mathcal{U} = \{u: Au \leq c\}$ for given matrix $A \in \mathbb{R}^{p \times m}$ and vector $c \in \mathbb{R}^{p}$, and a safe set $\mathcal{C} = \{x: h(x) \geq 0\}$, determine whether (i) $\mathcal{D} \subseteq \mathcal{C}$ and (ii) there exists a control policy $\mu$ such that $\mathcal{D}$ is positive invariant  under dynamics (\ref{eq:dyn}) and control policy $\mu$. 
\end{problem}

\subsection{Exact Conditions for Safety}
\label{subsec:safety-conditions}
When a CBF $b(x)$ is continuously differentiable, ensuring that (\ref{eq:CBF-def}) is satisfied is equivalent to verifying that there is no $x$ satisfying $b(x) = 0$, $\frac{\partial b}{\partial x}g(x) = 0$, and $\frac{\partial b}{\partial x}f(x) < 0$ \cite{clark2021verification}. When $b$ is represented by a ReLU neural network, however, $b$ will not be differentiable when the input $x$ leads to neurons having zero pre-activation input, i.e., when $\mathbf{T}(x) \neq \emptyset$. Although the set of $x$ with $\mathbf{T}(x) \neq \emptyset$ has measure zero, such points can nonetheless cause safety violations as illustrated by the following example.

\noindent \textbf{Example:} Let $x(t) \in \mathbb{R}^{2}$ and suppose the dynamics of $x$ are given by
\begin{displaymath}
    \begin{array}{ccc}
    \begin{array}{ccc}
        \dot{x}_{1}(t) &=& x_{1} + u \\
    \dot{x}_{2}(t) &=& -x_{1} + 5x_{2}
    \end{array} & \Rightarrow &
    f(x) = \left(
    \begin{array}{cc}
    1 & 0 \\
    -1 & 5
    \end{array}
    \right)x, \ g(x) = \left(
    \begin{array}{c}
    1 \\
    0
    \end{array}
    \right)
    \end{array}
\end{displaymath}

Suppose that $\mathcal{U} = \mathbb{R}^{m}$, the safe region $\mathcal{C} = \{x : x_{1}^{2} + x_{2}^{2} \leq 9\}$, and the candidate CBF is given by $b(x) = 1-||x||_{1}$. This CBF can be realized by a neural network with a single layer ($L=1$) with four neurons ($M_{1} = 4$), weights $W_{11} = (1 \ 0)^{T}$, $W_{12} = (-1 \ 0)^{T}$, $W_{13} = (0 \ 1)^{T}$, $W_{14} = (0 \ -1)^{T}$, $r_{1j} = 0$ for $j=1,\ldots,4$, $\Omega_{j} = -1$ for $j=1,\ldots,4$, and $\psi = 1$. We observe that $\mathcal{D} = \{x : b(x) \geq 0\} \subseteq \mathcal{C}$. Furthermore, whenever $\frac{\partial b}{\partial x}$ exists, we have $\frac{\partial b}{\partial x} \in \left\{(1 \ 1), (1 \ -1), (-1 \ 1), (-1 \ -1)\right\}$, each of which satisfies $\frac{\partial b}{\partial x}g(x) \neq 0$. On the other hand, the set $\mathcal{D}$ is not positive invariant. If $x_{2}(0) > \frac{1}{5}$, then $|x_{1}(0)| \leq 1$ implies that $\dot{x}_{2}(0) > 0$, and indeed, $x_{2}(t)$ will continue to increase until $x(t) \notin \mathcal{D}$. More details about this example can be found in Section \ref{sup:ce}. 

Fundamentally, this safety violation occurs because at $x = (0 \ 1)^{T}$, we have $b(x) = 0$, $\mathbf{T}(x) = \{(1,1), (1,2)\}$ (i.e., the preactivation input to the first and second neurons in the hidden layer is zero), creating a discontinuity in the slope of $b(x)$. For $x^{\prime}$ in the neighborhood of $(0 \ 1)^{T}$, the value of $\frac{\partial b}{\partial x}(x^{\prime})g(x^{\prime})$ will be either $1$ or $-1$. Since there is no single control input that satisfies (\ref{eq:CBF-def}) for both values of $\frac{\partial b}{\partial x}g(x^{\prime})$, it is impossible to ensure safety of the system in the neighborhood of $(0 \ 1)^{T}$. 



The following lemma addresses this issue by giving exact and general conditions for a NCBF with ReLU activation function to satisfy  positive invariance. We let $\partial \mathcal{D}$ denote the boundary of the set $\mathcal{D}$. 

\begin{lemma}
\label{lemma:NN-invariant}
The set $\mathcal{D}$ is positive invariant under control policy $\mu$ if and only if, for all $x \in \partial \mathcal{D}$, there exist $\mathbf{S} \in \mathbf{S}(x)$ such that  
\begin{align}
\label{eq:safety-ineq-1}
(\overline{\mathbf{W}}_{i-1}(\mathbf{S})W_{ij})^{T}(f(x) + g(x)\mu(x)) &\geq  0 \quad \forall (i,j) \in \mathbf{T}(x) \cap \mathbf{S}  \\
\label{eq:safety-ineq-2}
(\overline{\mathbf{W}}_{i-1}(\mathbf{S})W_{ij})^{T}(f(x)+g(x)\mu(x))&\leq 0 \quad \forall (i,j) \in \mathbf{T}(x) \setminus \mathbf{S}  \\
\label{eq:safety-ineq-3}
\overline{W}(\mathbf{S})^{T}(f(x) + g(x)\mu(x)) &\geq 0 
\end{align}
\end{lemma}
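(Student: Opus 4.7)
The plan is to apply a generalization of Nagumo's theorem for sets with piecewise-smooth boundaries: $\mathcal{D}$ is positive invariant under the closed-loop vector field $F(x):=f(x)+g(x)\mu(x)$ if and only if $F(x)$ lies in the Bouligand tangent cone $T_{\mathcal{D}}(x)$ at every $x\in\partial\mathcal{D}$. The bulk of the work is then to show that conditions (\ref{eq:safety-ineq-1})--(\ref{eq:safety-ineq-3}) form a combinatorial characterization of $F(x)\in T_{\mathcal{D}}(x)$ specifically for ReLU NCBFs.

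First I would describe the local geometry of $\mathcal{D}$ near a boundary point $x$. By Lemma \ref{lemma:activation-region}, a neighborhood of $x$ is partitioned by the polyhedra $\overline{\mathcal{X}}(\mathbf{S})$ for $\mathbf{S}\in\mathbf{S}(x)$, and on each such piece $b$ coincides with the affine function $\overline{W}(\mathbf{S})^{T}x+\overline{r}(\mathbf{S})$. Consequently a direction $v$ belongs to $T_{\mathcal{D}}(x)$ if and only if there exists $\mathbf{S}\in\mathbf{S}(x)$ along which the flow both (a) remains inside $\overline{\mathcal{X}}(\mathbf{S})$ and (b) keeps $b\geq 0$ in that piece to first order. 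Item (a) is governed by the unstable neurons $\mathbf{T}(x)$: once the stable neurons' active/inactive status agrees with $\mathbf{S}$, the constraints of $\overline{\mathcal{X}}(\mathbf{S})$ that are tight at $x$ are precisely the preactivation equalities $W_{ij}^{T}\mathbf{z}_{i-1}+r_{ij}=0$ for $(i,j)\in\mathbf{T}(x)$. Differentiating each preactivation along $v$ and using $\mathbf{z}_{i-1}=\overline{\mathbf{W}}_{i-1}(\mathbf{S})^{T}x+\overline{\mathbf{r}}_{i-1}(\mathbf{S})$ in the neighborhood gives a rate $(\overline{\mathbf{W}}_{i-1}(\mathbf{S})W_{ij})^{T}v$, which must be nonnegative when $(i,j)\in\mathbf{S}$ and nonpositive otherwise, yielding (\ref{eq:safety-ineq-1}) and (\ref{eq:safety-ineq-2}). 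Item (b) uses that $b$ restricted to $\overline{\mathcal{X}}(\mathbf{S})$ has gradient $\overline{W}(\mathbf{S})$, so nondecrease of $b$ along $v$ is exactly (\ref{eq:safety-ineq-3}).

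With this characterization in hand, both implications follow. For sufficiency, if some $\mathbf{S}\in\mathbf{S}(x)$ satisfies the three inequalities for $v=F(x)$, then $F(x)\in T_{\mathcal{D}}(x)$ at every boundary point, and nonsmooth Nagumo (e.g., \cite{clark2021verification}) yields positive invariance. For necessity, positive invariance implies that every trajectory emanating from $x\in\partial\mathcal{D}$ stays in $\mathcal{D}$; since $\mathbf{S}(x)$ is finite, the trajectory is confined for a right-neighborhood of $t=0$ to at least one piece $\overline{\mathcal{X}}(\mathbf{S})$, and the derivative calculations above force that particular $\mathbf{S}$ to satisfy (\ref{eq:safety-ineq-1})--(\ref{eq:safety-ineq-3}), since any strict violation would either push an unstable preactivation in the wrong direction (exiting the chosen piece toward a region where $b<0$) or drive $b$ itself strictly negative within the piece.

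The hard part will be the necessity direction: justifying that the flow genuinely \emph{selects} one admissible activation pattern $\mathbf{S}$ rather than oscillating instantaneously across pieces, and that this selected pattern is the one whose inequalities hold. This is where the precise choice of tangent cone (Bouligand, as opposed to Clarke) matters and where the existing nonsmooth-Nagumo machinery must be invoked carefully. A secondary bookkeeping issue is that the recursions defining $\overline{W}_{ij}(\mathbf{S})$ and $\overline{r}_{ij}(\mathbf{S})$ use an ``else $=0$'' branch; one must verify that only neurons in $\mathbf{T}(x)$ (not neurons whose preactivation coincidentally vanishes further downstream) contribute active first-order constraints, so that (\ref{eq:safety-ineq-1})--(\ref{eq:safety-ineq-2}) do not over-constrain $F(x)$.
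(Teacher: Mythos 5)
Your overall route is the same as the paper's: the generalized (nonsmooth) Nagumo theorem reduces positive invariance to $f(x)+g(x)\mu(x)\in\mathcal{T}_{\mathcal{D}}(x)$ at every boundary point, and the substance of the lemma is the characterization of the Bouligand tangent cone of $\mathcal{D}$ as the union, over $\mathbf{S}\in\mathbf{S}(x)$, of the polyhedral cones cut out by the active constraints of $\overline{\mathcal{X}}(\mathbf{S})\cap\mathcal{D}$. Your identification of the active constraints with the unstable neurons $\mathbf{T}(x)$ plus the level-set constraint, and hence of the three inequalities, matches the paper, which invokes the standard tangent-cone formula for sets defined by finitely many differentiable (here affine) inequalities. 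The sufficiency direction as you describe it is fine.

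The gap is in how you propose to close necessity. You plan to argue that a trajectory starting at $x$ is ``confined for a right-neighborhood of $t=0$ to at least one piece $\overline{\mathcal{X}}(\mathbf{S})$''; this is not true in general (the solution can cross between activation regions arbitrarily often as $t\to 0^{+}$, e.g.\ when $f(x)+g(x)\mu(x)$ points along a shared face), and your subsequent step conflates leaving the piece $\overline{\mathcal{X}}(\mathbf{S})$ with leaving $\mathcal{D}$ --- exiting one piece may simply mean entering a neighboring piece on which $b\geq 0$ still holds. No trajectory selection is needed. Since Nagumo's theorem is already an equivalence, necessity reduces to the purely geometric ``only if'' of your union decomposition: if $z$ fails the tangency conditions for every $\mathbf{S}\in\mathbf{S}(x)$, then $z\notin\mathcal{T}_{\mathcal{D}}(x)$. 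The paper proves this by observing that, for $\tau$ sufficiently small, $\mathrm{dist}(x+\tau z,\mathcal{D})=\min_{\mathbf{S}\in\mathbf{S}(x)}\mathrm{dist}\bigl(x+\tau z,\overline{\mathcal{X}}(\mathbf{S})\cap\mathcal{D}\bigr)$ because the remaining portion of $\mathcal{D}$ is a positive distance from $x$, and that a minimum of finitely many quantities each with strictly positive $\liminf$ of $\mathrm{dist}/\tau$ again has strictly positive $\liminf$. Replacing your trajectory argument with this distance argument closes the proof. Your secondary bookkeeping concern about the ``else $=0$'' branches is resolved by noting that, on $\overline{\mathcal{X}}(\mathbf{S})$, the affine form $(\overline{\mathbf{W}}_{i-1}(\mathbf{S})W_{ij})^{T}x+\overline{r}_{ij}(\mathbf{S})$ \emph{is} the preactivation of neuron $(i,j)$ under pattern $\mathbf{S}$, so it vanishes exactly when $(i,j)\in\mathbf{T}(x)$ and no spurious active constraints arise.
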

The proof is based on the generalized Nagumo's Theorem \cite{blanchini2008set} and a novel characterization of the Clarke tangent cone to $\partial\mathcal{D}$, and can be found in the supplementary material due to space constraints. Intuitively, conditions (\ref{eq:safety-ineq-1})--(\ref{eq:safety-ineq-3}) can be interpreted as follows. Condition (\ref{eq:safety-ineq-3}) is similar to (\ref{eq:CBF-def}) when the gradient is given by $\overline{W}(\mathbf{S})$, i.e., when $x$ is in the interior of the activation region defined by $\mathbf{S}$. Eq. (\ref{eq:safety-ineq-1})--(\ref{eq:safety-ineq-2}) can be interpreted as choosing the control input to ensure that $x$ remains in the activation region of $\mathbf{S}$ (i.e., $\overline{\mathcal{X}}(\mathbf{S})$). 

Lemma \ref{lemma:NN-invariant} can be used to construct NCBF-based safe control policies, analogous to control policies based on continuously differentiable CBFs. Formally, given a nominal control policy $\mu_{nom}$, at each time $t$, we can choose $u(t)$ by solving the optimization problem
\begin{equation}
    \label{eq:NCBF-control-policy}
    \begin{array}{ll}
    \min_{\mathbf{S} \in \mathbf{S}(x), u} & ||u-\mu_{nom}(x(t))||_{2}^{2} \\
    \mbox{s.t.} 
    & \overline{W}(\mathbf{S})^{T}(f(x) + g(x)\mu(x)) \geq -\alpha(b(x)) \\
    & u \in \mathcal{U},   (\ref{eq:safety-ineq-1}), (\ref{eq:safety-ineq-2}) 
    \end{array}
\end{equation}
where $\alpha$ satisfies the same conditions as in Theorem \ref{theorem:CBF-safety}. This problem can be solved by decomposing \eqref{eq:NCBF-control-policy} into $|\mathbf{S}(x)|$ quadratic programs (one for each $\mathbf{S}$) and then selecting the value of $u$ that minimizes the objective function across all of the QPs. In particular, if $\mathbf{S}(x)$ is a singleton, then the constraints on (\ref{eq:NCBF-control-policy}) reduce to Eq. (\ref{eq:CBF-def}). 

We then give an equivalent condition for $b(x)$ to be an  NCBF. As a preliminary, we say that a collection of activation sets $\mathbf{S}_{1},\ldots,\mathbf{S}_{r}$ is \emph{complete} if for any $\mathbf{S}^{\prime} \notin \{\mathbf{S}_{1},\ldots,\mathbf{S}_{r}\}$, we have $\overline{\mathcal{X}}(\mathbf{S}_{1}) \cap \cdots \cap \overline{\mathcal{X}}(\mathbf{S}_{r}) \cap \overline{\mathcal{X}}(\mathbf{S}^{\prime}) = \emptyset$. 

\begin{proposition}
\label{prop:safety-condition}
The function $b$ is a valid CBF iff the following two properties hold:
\begin{enumerate}
\item[(i)] For all activation sets $\mathbf{S}_{1},\ldots,\mathbf{S}_{r}$ with $\{\mathbf{S}_{1},\ldots,\mathbf{S}_{r}\}$ complete 
 and any $x$ satisfying $b(x) = 0$ and 
\begin{equation}
\label{eq:safety-condition-x}
x \in \left(\bigcap_{l=1}^{r}{\overline{\mathcal{X}}(\mathbf{S}_{l})}\right),
\end{equation}
there exist $l \in \{1,\ldots,r\}$ and $u \in \mathcal{U}$ such that 
\begin{align}
\label{eq:safety-condition-1}
(\overline{\mathbf{W}}_{i-1}(\mathbf{S}_{l})W_{ij})^{T}(f(x) + g(x)u) &\geq  0 \quad \forall (i,j) \in \mathbf{T}(\mathbf{S}_{1},\ldots,\mathbf{S}_{r}) \cap \mathbf{S}_{l} \\
(\overline{\mathbf{W}}_{i-1}(\mathbf{S}_{l})W_{ij})^{T}(f(x) + g(x)u) &\leq  0 \quad \forall (i,j) \in \mathbf{T}(\mathbf{S}_{1},\ldots,\mathbf{S}_{r}) \setminus \mathbf{S}_{l} \\
\label{eq:safety-condition-2}
\overline{W}(\mathbf{S}_{l})^{T}(f(x) + g(x)u) &\geq 0 
\end{align}
\item[(ii)] For all activation sets $\mathbf{S}$, we have 
\begin{equation}
\label{eq:neural-containment}
(\overline{\mathcal{X}}(\mathbf{S}) \cap \mathcal{D}) \setminus \mathcal{C} = \emptyset
\end{equation}
\end{enumerate}
\end{proposition}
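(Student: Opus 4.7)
The plan is to decompose the ``iff'' into its two parts: property (ii) will be identified with the containment $\mathcal{D} \subseteq \mathcal{C}$ from Problem \ref{prob:NCBF_veri}, and property (i) will be identified with the existence of a control policy $\mu$ that renders $\mathcal{D}$ positive invariant, via Lemma \ref{lemma:NN-invariant}. The first identification is direct: because every state of the network activates at least one set of neurons, the regions $\overline{\mathcal{X}}(\mathbf{S})$ cover $\mathcal{X}$, so
\[
\mathcal{D} \;=\; \bigcup_{\mathbf{S}} \bigl(\overline{\mathcal{X}}(\mathbf{S}) \cap \mathcal{D}\bigr),
\]
and $\mathcal{D} \subseteq \mathcal{C}$ holds iff each piece $\overline{\mathcal{X}}(\mathbf{S}) \cap \mathcal{D}$ lies in $\mathcal{C}$, which is exactly (\ref{eq:neural-containment}).

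The second identification is the substantive step. By Lemma \ref{lemma:NN-invariant}, a positive-invariance-preserving policy $\mu$ exists iff at every $x \in \partial\mathcal{D}$ there exist $\mathbf{S} \in \mathbf{S}(x)$ and $u \in \mathcal{U}$ satisfying (\ref{eq:safety-ineq-1})--(\ref{eq:safety-ineq-3}) with $\mu(x) = u$; the reverse direction is handled by pointwise selection, with $\mu$ extended arbitrarily off the boundary. Continuity of the ReLU network $b$ together with nondegeneracy of its zero level set gives $\partial\mathcal{D} = \{x : b(x) = 0\}$, so the quantification ``$x \in \partial\mathcal{D}$'' becomes ``$b(x) = 0$.'' It remains to reindex the enumeration of pairs $(x, \mathbf{S})$ into the form used in Proposition \ref{prop:safety-condition}(i).

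For this reindexing, I would argue in both directions. Given a complete collection $\{\mathbf{S}_1,\ldots,\mathbf{S}_r\}$ and $x \in \bigcap_l \overline{\mathcal{X}}(\mathbf{S}_l)$, completeness forces $\mathbf{S}(x) = \{\mathbf{S}_1,\ldots,\mathbf{S}_r\}$: every $\mathbf{S}_l$ lies in $\mathbf{S}(x)$, while any further activation region containing $x$ could be adjoined to the collection without emptying the intersection, violating completeness. Using the polyhedral description of Lemma \ref{lemma:activation-region}, a neuron is unstable at $x$ iff it is active under one $\mathbf{S}_l$ and inactive under another, giving $\mathbf{T}(x) = \mathbf{T}(\mathbf{S}_1,\ldots,\mathbf{S}_r)$. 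Conversely, for any $x \in \partial\mathcal{D}$, the set $\mathbf{S}(x)$ is itself a complete collection by the same reasoning. Under this correspondence, the inequalities (\ref{eq:safety-ineq-1})--(\ref{eq:safety-ineq-3}) at $(x,\mathbf{S}_l)$ coincide with (\ref{eq:safety-condition-1})--(\ref{eq:safety-condition-2}). The main obstacle is precisely this bookkeeping --- the bijection between complete collections and the sets $\mathbf{S}(x)$, together with the identification of $\mathbf{T}(x)$ with $\mathbf{T}(\mathbf{S}_1,\ldots,\mathbf{S}_r)$ --- plus the secondary technicality of upgrading pointwise existence of $u$ to existence of a policy $\mu$, which is immediate since Problem \ref{prob:NCBF_veri} only requires existence of some selection.
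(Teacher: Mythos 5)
Your proposal is correct and follows essentially the same route as the paper: condition (ii) is identified with $\mathcal{D} \subseteq \mathcal{C}$ via the covering of $\mathcal{D}$ by the activation regions, and condition (i) is reduced to the pointwise invariance criterion of Lemma \ref{lemma:NN-invariant} (hence to Nagumo's theorem and the tangent-cone characterization of Proposition \ref{lemma:tangent-cone-NN}). You are in fact more explicit than the paper about the bookkeeping --- the correspondence between complete collections and the sets $\mathbf{S}(x)$, the identification $\mathbf{T}(x) = \mathbf{T}(\mathbf{S}_{1},\ldots,\mathbf{S}_{r})$, and the converse direction of the equivalence --- all of which the paper's proof leaves implicit.
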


The proof can be found in the supplementary material. 
We refer to any $x$ that fails to meet at least one of conditions (i) and (ii) of Proposition \ref{prop:safety-condition} as a safety counterexample. 

    
\begin{figure}[h!t]
    \centering    \includegraphics[width=0.75\textwidth]{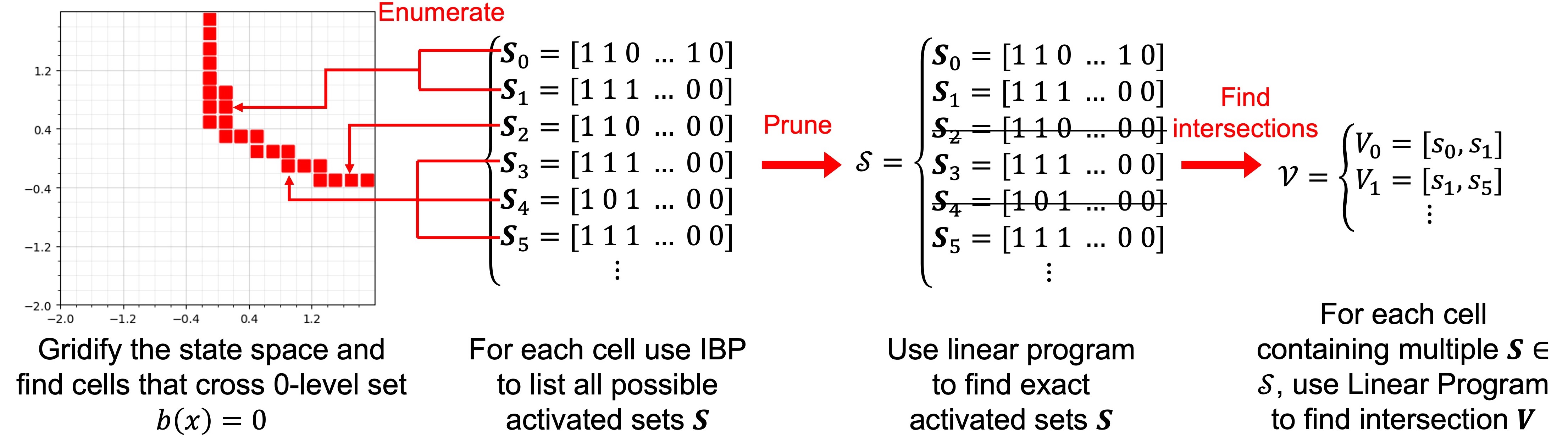}
    \caption{Illustration of proposed coarser-to-finer searching method. Hyper-cubes that intersect the safety boundaries are marked in red. When all possible activation sets are listed, we can  identify exact activation set and intersections. }
    \label{fig:illustrate_search}
\end{figure}
\section{Verification Algorithm}
\label{sec:verification}
The preceding proposition motivates our overall approach to verifying NCBFs, consisting of the following steps. \textbf{(Step 1)} We conduct coarser-to-finer search by discretizing the state space into hyper-cubes and use linear relaxation based perturbation analysis (LiRPA) to identify grid squares that intersect the boundary $\{x: b(x) = 0\}$. \textbf{(Step 2)} We enumerate all possible activation sets within each hyper-cube using Interval Bound Propagation (IBP). We then identify the activation sets and intersections that satisfy $b(x)=0$ using linear programming. \textbf{(Step 3)} For each activation set and intersection of activation sets, we verify the conditions of Proposition \ref{prop:safety-condition}. In what follows, we describe each step in detail.

\subsection{Enumeration of Activation Set Boundaries}
\label{subsec:enumerating}



We next present the approach to enumerate activation sets of a given ReLU NCBF that intersect the safety boundary, $b(x)=0$. Our approach first conducts a coarse-grained search that over-approximates the collection of activation sets that intersect with the safety boundary. We then prune this collection to remove activation sets that cannot be realized. Specifically, we first identify regions that contain $b(x)=0$, then enumerate all possible activation sets within the region and finally identify the sets that intersect the safety boundaries.  

We first discretize the given state space $\mathcal{X}$ into hyper-cubes. We compute lower and upper bounds of $b(x)$ on each hyper-cube, denoted $b_{l}$ and $b_{u}$ respectively, with linear relaxation based perturbation analysis (LiRPA), i.e., auto LiRPA \cite{xu2020automatic}. For each hyper-cube $B$, we determine $B$ contains $b(x)=0$ (red squares in Fig \ref{fig:illustrate_search}) if criteria $sgn(b_l)\cdot sgn(b_u)\leq 0$ holds, where $sgn(*)$ is the sign function.  
For each $B$ contains $b(x)=0$, we utilize IBP to over-approximate unstable neurons. Details on this IBP procedure can be found in Section \ref{sup:enumeration}. Let $\tilde{\mathcal{S}}$ denote the over-approximated collection of activation sets computed by IBP. Note that $\boldsymbol{S} \in \Tilde{\mathcal{S}}$ may not intersect with $b(x)=0$. 
As shown in Fig. \ref{fig:illustrate_search}, we let $\mathcal{S}$ denote the collection of activation sets $\mathbf{S} \in \tilde{\mathcal{S}}$ that satisfy  
\begin{equation}
\begin{split}
    \label{eq:linearfeasible}
     \overline{W}(\mathbf{S})^{T}x + \overline{r}(\mathbf{S}) = 0 \\
     (\overline{\mathbf{W}}_{i-1}(\mathbf{S})W_{ij})^{T}x + \overline{r}_{ij}(\mathbf{S}) \geq 0 \ \forall (i,j) \in \mathbf{S} \\
     (\overline{\mathbf{W}}_{i-1}(\mathbf{S})W_{ij})^{T}x + \overline{r}_{ij}(\mathbf{S}) \leq 0 \ \forall (i,j) \notin \mathbf{S}
\end{split}
\end{equation}
for some $x \in B$. The activation set boundaries indicate unstable neurons, which need to be verified separately. Hence, we search for intersections as follows, where $2^{\mathcal{S}}$ denotes the set of subsets of $\mathcal{S}$. 
$$\mathcal{V} = \left\{Z \in 2^{\mathcal{S}} : \left(\bigcap_{\mathbf{S} \in Z}{\overline{\mathcal{X}}(\mathbf{S})}\right) \cap \{x: b(x) = 0\} \neq \emptyset\right\}$$ 


\subsection{Verifying Safety of Each Activation Set}
\label{subsec:verification}

With the activation sets enumerated, the following result gives an equivalent condition for verifying that there is no safety counterexample in the interior of $\overline{\mathcal{X}}(\mathbf{S})$ for a given activation set $\mathbf{S}$.

\begin{lemma}
\label{lemma:single-activation-set-condition}
There is no safety counterexample in the set $\overline{\mathcal{X}}(\mathbf{S}) \setminus \bigcup_{\mathbf{S}^{\prime} \neq \mathbf{S}}{\overline{\mathcal{X}}(\mathbf{S})}$ if and only if:
\begin{enumerate}
    \item There do not exist $x \in \mathcal{X}$ and $y \in \mathbb{R}^{p+1}$ satisfying (a) $x \in \mbox{int}(\overline{\mathcal{X}}(\mathbf{S}))$, (b) $b(x) = 0$, (c) $y \geq 0$, (d) $y^{T}\left(\begin{array}{c} -\overline{W}(\mathbf{S})^{T}g(x) \\ A \end{array} \right) = 0$, and (e) $y^{T}\left(\begin{array}{c}\overline{W}(\mathbf{S})^{T}f(x) \\ c\end{array}\right) < 0$.
    \item There does not exist $x \in \mathcal{X}$ with $b(x) = 0$, $x \in \mbox{int}(\overline{\mathcal{X}}(\mathbf{S}))$, and $h(x) < 0$.
\end{enumerate}
\end{lemma}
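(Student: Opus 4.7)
The plan is to translate each condition of Proposition \ref{prop:safety-condition}, when restricted to the strict interior of $\overline{\mathcal{X}}(\mathbf{S})$, into a concrete alternative-of-systems characterization, and then invoke Farkas' lemma. First, on $\mbox{int}(\overline{\mathcal{X}}(\mathbf{S}))$ the activation set is unique, so $\mathbf{S}(x) = \{\mathbf{S}\}$ and $\mathbf{T}(x) = \emptyset$. Taking $r = 1$ and $\mathbf{S}_1 = \mathbf{S}$ in Proposition \ref{prop:safety-condition}(i), the set $\mathbf{T}(\mathbf{S}_1,\ldots,\mathbf{S}_r)$ is empty and the two unstable-neuron preservation inequalities are vacuous, leaving only (\ref{eq:safety-condition-2}). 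Condition (i) at such an $x$ therefore reduces to feasibility of the linear system in $u$: $Au \leq c$ together with $-\overline{W}(\mathbf{S})^T g(x)\, u \leq \overline{W}(\mathbf{S})^T f(x)$, required for every $x \in \mbox{int}(\overline{\mathcal{X}}(\mathbf{S}))$ with $b(x) = 0$.

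Next, I would apply Farkas' lemma to this system. Writing the stacked inequalities as $B(x)u \leq d(x)$, where the first row of $B(x)$ is $-\overline{W}(\mathbf{S})^T g(x)$ and the remaining rows are $A$, and $d(x)$ has first entry $\overline{W}(\mathbf{S})^T f(x)$ and remaining entries $c$, Farkas states that $B(x)u \leq d(x)$ is infeasible iff there exists $y \geq 0$ with $y^T B(x) = 0$ and $y^T d(x) < 0$. This is exactly the pair $(x,y)$ forbidden in clause (1) of the lemma, so non-existence of a counterexample to Proposition \ref{prop:safety-condition}(i) inside $\mbox{int}(\overline{\mathcal{X}}(\mathbf{S}))$ is equivalent to clause (1) by contraposition.

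For Proposition \ref{prop:safety-condition}(ii), a counterexample in the activation region is an $x$ with $b(x) \geq 0$ and $h(x) < 0$. Since $b$ is affine on $\overline{\mathcal{X}}(\mathbf{S})$ with gradient $\overline{W}(\mathbf{S})$, walking from such a violation along $-\overline{W}(\mathbf{S})$ drives $b$ linearly to zero; by continuity of $h$, the walk either produces an interior point with $b(x) = 0$ and $h(x) < 0$ (matching the negation of clause (2)), or exits $\mbox{int}(\overline{\mathcal{X}}(\mathbf{S}))$ through an activation boundary, which is covered by an adjacent activation set's instance of the lemma and by the intersection checks in Section \ref{subsec:verification}. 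Combining this with the Farkas characterization yields the claimed iff.

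The main obstacle is the reduction in the clause (ii) step from $b(x) \geq 0$ to $b(x) = 0$: because $h$ is an arbitrary nonlinear function, a local linear-descent argument along $-\overline{W}(\mathbf{S})$ does not by itself rule out interior violations with $b(x) > 0$ and $h(x) < 0$ that lie far from $\{b=0\}$ or whose descent paths pierce activation-region boundaries before reaching $\{b=0\}$. Rigorously justifying clause (2) therefore requires pairing this lemma with the adjacent-region and intersection checks so that every candidate point in $\mathcal{D} \setminus \mathcal{C}$ is caught by some instance. The Farkas step itself, by contrast, is standard once condition (i) has been reduced to a finite linear feasibility problem in $u$.
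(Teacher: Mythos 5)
Your treatment of clause (1) is essentially the paper's argument, made more explicit: on $\mbox{int}(\overline{\mathcal{X}}(\mathbf{S}))$ no neuron is unstable, so $\mathbf{T}(x)=\emptyset$, the invariance condition of Lemma \ref{lemma:NN-invariant} reduces to feasibility in $u$ of $Au\leq c$ together with $-\overline{W}(\mathbf{S})^{T}g(x)u\leq \overline{W}(\mathbf{S})^{T}f(x)$, and Farkas' lemma converts infeasibility of that system into existence of the dual certificate $y$ appearing in clause (1). The paper's own proof of this lemma does not even write out the Farkas step (it argues only the sufficiency direction through the tangent-cone characterization of Proposition \ref{lemma:tangent-cone-NN} and Nagumo's theorem, and invokes Farkas explicitly only in the proof of Lemma \ref{lemma:multi-activation-set-condition}), so on this point your write-up is, if anything, more complete than the reference proof, and it correctly delivers both directions of the equivalence.

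The difficulty you flag in clause (2) is real, but the descent-along-$-\overline{W}(\mathbf{S})$ argument should be dropped rather than patched: as you yourself observe, it cannot exclude a component of $\{h<0\}$ sitting strictly inside $\{b>0\}$ (picture $\mathcal{D}$ a disk and $\mathcal{C}$ the complement of a small disk at its center; then $\partial\mathcal{D}\subseteq\mathcal{C}$ while $\mathcal{D}\not\subseteq\mathcal{C}$). The paper's proof simply asserts that condition 2 rules out counterexamples to condition (ii) of Proposition \ref{prop:safety-condition}; the implicit reading is that a ``safety counterexample'' is localized to points with $b(x)=0$, consistent with Step 1 of the algorithm in Section \ref{sec:verification}, which only ever examines the zero level set. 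Under that reading clause (2) is a direct restatement and needs no descent argument; under a literal reading of condition (ii), which quantifies over all of $\overline{\mathcal{X}}(\mathbf{S})\cap\mathcal{D}$, the boundary-only check is genuinely insufficient, and neither your proof nor the paper's closes that gap. State which reading you adopt and remove the linear-descent step; everything else in your argument stands.
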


The proof can be found in the supplementary material. Conditions 1 and 2 of Lemma \ref{lemma:single-activation-set-condition} can be verified by solving  nonlinear programs (see supplementary material for the formulations of these nonlinear programs).  Such nonlinear programs are solved for each activation set $\mathbf{S}$ in the collection $\mathcal{S}$ computed during the enumeration phase. 
The following corollary describes the special case where there are no constraints on the control, i.e., $\mathcal{U} = \mathbb{R}^{m}$.

\begin{corollary}
    \label{corollary:single-set-unbounded}
    If $\mathcal{U} = \mathbb{R}^{m}$, then there is no safety counterexample in the set $\overline{\mathcal{X}}(\mathbf{S}) \setminus \bigcup_{\mathbf{S}^{\prime} \neq \mathbf{S}}{\overline{\mathcal{X}}(\mathbf{S}^{\prime})}$ if and only if (1) there does not exist $x \in \mathcal{X}$ satisfying $x \in \mbox{int}(\overline{\mathcal{X}}(\mathbf{S}))$, $b(x) = 0$, $\overline{W}(\mathbf{S})^{T}g(x) = 0$, and $\overline{W}(\mathbf{S})^{T}f(x) < 0$ and (2)  there does not exist $x \in \mathcal{X}$ with $b(x) = 0$, $x \in \mbox{int}(\overline{\mathcal{X}}(\mathbf{S}))$, and $h(x) < 0$.
\end{corollary}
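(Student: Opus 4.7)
The corollary is the specialization of Lemma~\ref{lemma:single-activation-set-condition} to the case $\mathcal{U}=\mathbb{R}^{m}$, which formally corresponds to removing the linear constraints $Au \leq c$ (taking the number of rows $p$ of $A$ to be zero). My plan is to derive it as such a specialization, using one short Farkas-type step to reduce the dual vector to a single nonnegative scalar.

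With $p=0$, the stacked column vectors appearing in conditions (d) and (e) of Lemma~\ref{lemma:single-activation-set-condition} collapse to their first (scalar) entries $-\overline{W}(\mathbf{S})^{T}g(x)$ and $\overline{W}(\mathbf{S})^{T}f(x)$, respectively, and the dual multiplier $y \in \mathbb{R}^{p+1}$ becomes a single nonnegative scalar. Condition (e) then reads $y \cdot \overline{W}(\mathbf{S})^{T}f(x) < 0$, which forces $y>0$ (otherwise the expression degenerates to $0<0$); dividing through by $y$ turns (d) into $\overline{W}(\mathbf{S})^{T}g(x)=0$ and (e) into $\overline{W}(\mathbf{S})^{T}f(x)<0$. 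Therefore condition~1 of the lemma becomes exactly condition~(1) of the corollary, while condition~2 transfers verbatim to condition~(2).

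As an independent cross-check, one can derive the corollary directly from Proposition~\ref{prop:safety-condition}. On the interior of $\overline{\mathcal{X}}(\mathbf{S})$ we have $\mathbf{T}(x)=\emptyset$ and $\mathbf{S}(x)=\{\mathbf{S}\}$, so at any $x$ with $b(x)=0$ the only nontrivial safety requirement is that the affine inequality $\overline{W}(\mathbf{S})^{T}g(x)\,u \geq -\overline{W}(\mathbf{S})^{T}f(x)$ admit some $u \in \mathbb{R}^{m}$. Such a scalar linear inequality in $u$ is infeasible precisely when the coefficient vector $\overline{W}(\mathbf{S})^{T}g(x)$ vanishes while the right-hand side $-\overline{W}(\mathbf{S})^{T}f(x)$ is strictly positive, recovering condition~(1); condition~(2) is just $\mathcal{D}\subseteq\mathcal{C}$ restricted to the single activation cell. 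I do not anticipate any substantive obstacle: all the real content is already carried by Lemma~\ref{lemma:single-activation-set-condition} and Proposition~\ref{prop:safety-condition}, and the corollary is essentially bookkeeping, with the $y=0$ corner case being the only point that needs a one-line justification.
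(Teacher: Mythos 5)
Your proposal is correct and matches the paper's (implicit) reasoning: the paper states the corollary without a separate proof precisely because it is the $p=0$ specialization of Lemma~\ref{lemma:single-activation-set-condition}, and your Farkas-type reduction of the scalar dual variable (noting $y>0$ is forced by condition (e)) is the right one-line justification. The cross-check via Proposition~\ref{prop:safety-condition} is also sound and consistent with the paper's remark in Section~\ref{subsec:safety-conditions} on the differentiable case.
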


Under the conditions of Corollary \ref{corollary:single-set-unbounded}, the complexity of the verification problem can be reduced. 
If $g(x)$ is a constant matrix $G$, then safety is automatically guaranteed if $\overline{W}(\mathbf{S})^{T}G \neq 0$. If $f(x)$ is linear in $x$ as well, then verification can be performed via a linear program. 


\subsection{Verification of Activation Set Intersections}
\label{subsec:verification-boundary}
With the activation set boundaries enumerated, we now describe an approach for verifying safety of intersections between activation sets. 

\begin{lemma}
\label{lemma:multi-activation-set-condition}
Suppose that the sets $\mathbf{S}_{1},\ldots,\mathbf{S}_{r}$ satisfy condition (ii) of Lemma \ref{lemma:single-activation-set-condition}. There is no safety counterexample in the set $$\overline{\mathcal{X}}(\mathbf{S}_{1}) \cap \cdots \cap \overline{\mathcal{X}}(\mathbf{S}_{r}) \setminus \bigcup_{S^{\prime} \notin \{\mathbf{S}_{1},\ldots,\mathbf{S}_{r}\}}{\overline{\mathcal{X}}(\mathbf{S}^{\prime})}$$ if and only if there do not exist $x \in \mathcal{X}$ and $y_{1},\ldots,y_{r} \in \mathbb{R}^{T + p+1}$, where $T = |\mathbf{T}(\mathbf{S}_{1},\ldots,\mathbf{S}_{r})|$, satisfying (a) $x \in \mbox{int}(\overline{\mathcal{X}}(\mathbf{S}_{1})) \cap \cdots \cap \mbox{int}(\overline{\mathcal{X}}(\mathbf{S}_{r}))$, (b) $b(x) = 0$, (c) $y_{l} \geq 0$ $\forall l$, (d) $\forall l=1,\ldots,r$  , $y_{l}^{T}\Theta_{l}(\mathbf{S}_{1},\ldots,\mathbf{S}_{r}(x)) = 0$, where $\Theta_{l}(\mathbf{S}_{1},\ldots,\mathbf{S}_{r},x)$ is a $(T+p+1) \times m$ matrix 
\begin{equation}
\label{eq:multiple-set-1}
    \Theta_{l}(\mathbf{S}_{1},\ldots,\mathbf{S}_{r},x) \triangleq \left(
    \begin{array}{c}
    -(\overline{\mathbf{W}}_{i-1}(\mathbf{S}_{l})W_{ij})^{T}g(x) : (i,j) \in \mathbf{T}(\mathbf{S}_{1},\ldots,\mathbf{S}_{r}) \cap \mathbf{S}_{l} \\
    (\overline{\mathbf{W}}_{i-1}(\mathbf{S}_{l})W_{ij})^{T}g(x) : (i,j) \in \mathbf{T}(\mathbf{S}_{1},\ldots,\mathbf{S}_{r}) \setminus \mathbf{S}_{l} \\
    -W(\mathbf{S}_{l})^{T}g(x) \\
    A
    \end{array}
    \right)
\end{equation}
and (e)  $\forall l=1,\ldots,r$, $y_{l}^{T}\Lambda_{l}(\mathbf{S}_{1},\ldots,\mathbf{S}_{r},x) < 0$, where $\Lambda_{l}(\mathbf{S}_{1},\ldots,\mathbf{S}_{r},x) \in \mathbb{R}^{T+p+1}$ is given by
\begin{equation}
\label{eq:multiple-set-2}
    \Lambda_{l}(\mathbf{S}_{1},\ldots,\mathbf{S}_{r},x) \triangleq \left(
    \begin{array}{c}
    (\overline{\mathbf{W}}_{i-1}(\mathbf{S}_{l})W_{ij})^{T}f(x) : (i,j) \in \mathbf{T}(\mathbf{S}_{1},\ldots,\mathbf{S}_{r}) \cap \mathbf{S}_{l} \\
    -(\overline{\mathbf{W}}_{i-1}(\mathbf{S}_{l})W_{ij})^{T}f(x) : (i,j) \in \mathbf{T}(\mathbf{S}_{1},\ldots,\mathbf{S}_{r}) \setminus \mathbf{S}_{l} \\
    \overline{W}(\mathbf{S}_{l})^{T}f(x) \\
    c
    \end{array}
    \right)
    \end{equation}
\end{lemma}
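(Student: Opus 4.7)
The plan is to recognize that this lemma is essentially a Farkas-lemma recasting of Proposition \ref{prop:safety-condition} restricted to the multi-activation region. First, I would observe that any point $x$ in the multi-activation region $\overline{\mathcal{X}}(\mathbf{S}_{1})\cap\cdots\cap\overline{\mathcal{X}}(\mathbf{S}_{r}) \setminus \bigcup_{\mathbf{S}'\notin\{\mathbf{S}_{1},\ldots,\mathbf{S}_{r}\}}\overline{\mathcal{X}}(\mathbf{S}')$ has $\mathbf{S}(x)=\{\mathbf{S}_{1},\ldots,\mathbf{S}_{r}\}$ and $\mathbf{T}(x)=\mathbf{T}(\mathbf{S}_{1},\ldots,\mathbf{S}_{r})$, so $\{\mathbf{S}_{1},\ldots,\mathbf{S}_{r}\}$ is automatically complete at $x$. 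Since condition (ii) of Lemma \ref{lemma:single-activation-set-condition} is assumed for each $\mathbf{S}_{l}$, the containment clause (\ref{eq:neural-containment}) of Proposition \ref{prop:safety-condition} is handled, so by Proposition \ref{prop:safety-condition} a safety counterexample in this region is exactly a point $x$ with $b(x)=0$ such that, for every $l\in\{1,\ldots,r\}$, no $u\in\mathcal{U}$ simultaneously satisfies (\ref{eq:safety-condition-1})--(\ref{eq:safety-condition-2}).

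Second, for each fixed $l$, I would rewrite ``there exists such a $u$'' as a linear feasibility problem in $u$. Moving the $f(x)$ terms to the right-hand side, multiplying the $\geq 0$ inequalities by $-1$, and appending the rows of $Au\leq c$, the system reads $\Theta_{l}(\mathbf{S}_{1},\ldots,\mathbf{S}_{r},x)\,u \leq \Lambda_{l}(\mathbf{S}_{1},\ldots,\mathbf{S}_{r},x)$ with $\Theta_{l}$ and $\Lambda_{l}$ exactly as in (\ref{eq:multiple-set-1})--(\ref{eq:multiple-set-2}). By the inequality-form theorem of alternatives (Farkas' lemma), this system is infeasible in $u$ if and only if there is a multiplier $y_{l}\geq 0$ with $y_{l}^{T}\Theta_{l} = 0$ and $y_{l}^{T}\Lambda_{l} < 0$. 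Taking the conjunction over $l=1,\ldots,r$ yields one multiplier per index and gives exactly conditions (a)--(e) of the lemma, proving both directions simultaneously.

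The main obstacle is bookkeeping rather than new mathematics: one must verify that the sign choices and row orderings in $\Theta_{l}$ and $\Lambda_{l}$ correctly encode the three-way split into $(i,j)\in\mathbf{T}\cap\mathbf{S}_{l}$, $(i,j)\in\mathbf{T}\setminus\mathbf{S}_{l}$, the overall $\overline{W}(\mathbf{S}_{l})$ constraint, and the admissibility rows $A$, together with the corresponding entries of $\Lambda_{l}$ built from $f(x)$ and $c$. A minor subtlety is that the ``int'' in condition (a) should be read as the relative interior of the face cut out by the unstable neurons in $\mathbf{T}(\mathbf{S}_{1},\ldots,\mathbf{S}_{r})$: this is what the exclusion $\setminus\bigcup_{\mathbf{S}'\notin\{\mathbf{S}_{1},\ldots,\mathbf{S}_{r}\}}\overline{\mathcal{X}}(\mathbf{S}')$ enforces, and is what guarantees $\mathbf{T}(x)$ equals $\mathbf{T}(\mathbf{S}_{1},\ldots,\mathbf{S}_{r})$ exactly, so that no additional activation sets enter and no additional multipliers are needed.
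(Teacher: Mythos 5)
Your proposal is correct and matches the paper's own argument essentially step for step: the containment clause is discharged by the assumed condition (ii) of Lemma \ref{lemma:single-activation-set-condition}, a counterexample in the intersection region is characterized as a point where, for every $l$, the linear system $\Theta_{l}u \leq \Lambda_{l}$ (safety inequalities plus $Au \leq c$) is infeasible, and Farkas' lemma converts each infeasibility into the existence of a multiplier $y_{l} \geq 0$ with $y_{l}^{T}\Theta_{l} = 0$ and $y_{l}^{T}\Lambda_{l} < 0$. Your added remarks on completeness of $\{\mathbf{S}_{1},\ldots,\mathbf{S}_{r}\}$ at such points and the relative-interior reading of condition (a) are consistent clarifications rather than a different route.
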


The proof can be found in the supplementary material. The verification problem can be mapped to solving a nonlinear program. This nonlinear program is solved for each $(\mathbf{S}_{1},\ldots,\mathbf{S}_{r}) \in \mathcal{V}$, where $\mathcal{V}$ is computed during the enumeration phase.
Note that, if $g(x)$ is constant, then the constraints of the nonlinear program are linear, and if $f(x)$ is linear then the problem reduces to solving a system of linear and bilinear inequalities. Furthermore, if $f(x)$ can be bounded over $x \in \overline{\mathcal{X}}(\mathbf{S}_{1}) \cap \cdots \cap \overline{\mathcal{X}}(\mathbf{S}_{r})$ and $b(x) = 0$, then this bound can be used to derive a linear relaxation to the objective function of (\ref{eq:multiple-set-nonlinear-program}), thus relaxing the nonlinear program to a linear program. 

The complexity of this approach can also be reduced by utilizing sufficient conditions for safety that are easier to check. For instance, if there exists $u \in \mathcal{U}$ such that $\overline{W}(\mathbf{S})^{T}(f(x) + g(x)u) \geq 0$ for all $\mathbf{S} \in \mathbf{S}(x)$, then the criteria of Lemma \ref{lemma:multi-activation-set-condition} are satisfied. In particular, if $\overline{W}(\mathbf{S})^{T}f(x) \geq 0$ for all $\mathbf{S} \in \mathbf{S}(x)$, then the conditions are satisfied with $u = 0$. 
The following result is a straightforward consequence of Proposition \ref{prop:safety-condition}, Lemma \ref{lemma:single-activation-set-condition}, and Lemma \ref{lemma:multi-activation-set-condition}.

\begin{theorem}
\label{theorem:overall-verification-proof}
Suppose that the conditions of Lemma \ref{lemma:single-activation-set-condition} are satisfied for each $\mathbf{S} \in \mathcal{S}$ and the conditions of Lemma \ref{lemma:multi-activation-set-condition} are satisfied for each $\{\mathbf{S}_{1},\ldots,\mathbf{S}_{r}\} \in \mathcal{V}$. Then the NCBF $b$ satisfies the conditions of Problem \ref{prob:NCBF_veri}. 
\end{theorem}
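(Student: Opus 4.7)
The plan is to argue that Theorem \ref{theorem:overall-verification-proof} follows by assembling the three preceding results, with the enumeration procedure of Section \ref{subsec:enumerating} supplying the completeness needed to cover the boundary $\{x : b(x) = 0\}$. Concretely, I would show that under the stated hypotheses both conditions (i) and (ii) of Proposition \ref{prop:safety-condition} hold, which by that proposition is equivalent to $b$ being a valid NCBF, and hence implies the existence of a policy $\mu$ under which $\mathcal{D}$ is positive invariant and contained in $\mathcal{C}$, as required by Problem \ref{prob:NCBF_veri}.

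First I would partition the zero-level set $\{x : b(x) = 0\}$ according to how many activation regions $x$ belongs to. For any such $x$, let $\mathbf{S}(x) = \{\mathbf{S}_1,\ldots,\mathbf{S}_r\}$ be the collection of activation sets containing $x$; by construction this collection is complete in the sense defined before Proposition \ref{prop:safety-condition}. When $r = 1$, the point $x$ lies in $\mathrm{int}(\overline{\mathcal{X}}(\mathbf{S}))\setminus\bigcup_{\mathbf{S}'\neq\mathbf{S}}\overline{\mathcal{X}}(\mathbf{S}')$ for the unique $\mathbf{S}\in\mathcal{S}$, so Lemma \ref{lemma:single-activation-set-condition} applied to $\mathbf{S}$ rules out any safety counterexample at $x$, giving both the control-admissibility conditions (\ref{eq:safety-condition-1})--(\ref{eq:safety-condition-2}) of Proposition \ref{prop:safety-condition} (by LP duality, which is how Lemma \ref{lemma:single-activation-set-condition} was derived) and the containment (\ref{eq:neural-containment}) via its condition~2. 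When $r \geq 2$, the point $x$ lies in an intersection corresponding to some $\{\mathbf{S}_1,\ldots,\mathbf{S}_r\}\in\mathcal{V}$, and Lemma \ref{lemma:multi-activation-set-condition} together with the matrices $\Theta_l$ and $\Lambda_l$ in (\ref{eq:multiple-set-1})--(\ref{eq:multiple-set-2}) rules out a counterexample, which is exactly condition~(i) of Proposition \ref{prop:safety-condition} specialized to this complete collection.

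Next I would combine these two cases. For condition (i) of Proposition \ref{prop:safety-condition}: every $x$ with $b(x) = 0$ is covered by one of the two cases above, so in every case there exist an index $l$ and a control $u \in \mathcal{U}$ satisfying (\ref{eq:safety-condition-1})--(\ref{eq:safety-condition-2}). For condition (ii) of Proposition \ref{prop:safety-condition}, the containment $(\overline{\mathcal{X}}(\mathbf{S}) \cap \mathcal{D})\setminus \mathcal{C} = \emptyset$ for all activation sets $\mathbf{S}$ needs to hold for every $\mathbf{S}$ that actually meets the boundary of $\mathcal{D}$; this is exactly condition~2 of Lemma \ref{lemma:single-activation-set-condition} (and is implicit in the hypothesis of Lemma \ref{lemma:multi-activation-set-condition}). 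For activation sets disjoint from $\{x : b(x) = 0\}$, either $\overline{\mathcal{X}}(\mathbf{S})\cap \mathcal{D} = \emptyset$ or $\overline{\mathcal{X}}(\mathbf{S})\subseteq \mathcal{D}$ by continuity, and in the latter case (\ref{eq:neural-containment}) reduces to checking that $h \geq 0$ on $\overline{\mathcal{X}}(\mathbf{S})$, which can be folded into the enumeration or handled separately as in Section \ref{subsec:enumerating}. Having verified both conditions of Proposition \ref{prop:safety-condition}, we conclude that $b$ is a valid NCBF, and the policy defined by (\ref{eq:NCBF-control-policy}) then makes $\mathcal{D} \subseteq \mathcal{C}$ positive invariant, settling Problem \ref{prob:NCBF_veri}.

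The main obstacle is not any new technical content but the bookkeeping that $\mathcal{S}$ and $\mathcal{V}$ produced in Section \ref{subsec:enumerating} really do enumerate every activation set and every complete intersection that meets $\{x : b(x) = 0\}$; the proof should explicitly invoke the IBP and linear-feasibility pruning (\ref{eq:linearfeasible}) to justify that any $x$ with $b(x)=0$ has its activation collection $\mathbf{S}(x)$ represented in $\mathcal{S}\cup\mathcal{V}$, so no boundary point is missed. Once this coverage is established, the theorem is a direct corollary of Proposition \ref{prop:safety-condition} and Lemmas \ref{lemma:single-activation-set-condition} and \ref{lemma:multi-activation-set-condition}.
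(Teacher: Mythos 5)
Your proposal is correct and follows essentially the same route as the paper's own (very terse) proof: split boundary points into those interior to a single activation region, handled by Lemma \ref{lemma:single-activation-set-condition}, and those on intersections, handled by Lemma \ref{lemma:multi-activation-set-condition}, then conclude via Proposition \ref{prop:safety-condition}. You are in fact more careful than the paper, which silently assumes the coverage of $\mathcal{S}$ and $\mathcal{V}$ and does not discuss condition (ii) for activation regions that do not meet the boundary $\{x: b(x)=0\}$.
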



\section{Experiments}
\label{sec:experiments}
In this section, we evaluate our proposed method to verify neural control barrier functions on three systems, namely Darboux, obstacle avoidance and spacecraft rendezvous. The experiments run on a 64-bit Windows PC with Intel i7-12700 processor, 32GB RAM and NVIDIA GeForce RTX 3080 GPU. We include experiment details in the supplement. 

\subsection{Experiment Setup}
\textbf{Darboux:} We consider the Darboux system~\cite{zeng2016darboux}, a nonlinear open-loop polynomial system that has been widely used as a benchmark for constructing barrier certificates. The dynamic model is given in the supplement. We obtain the trained NCBF by following the method proposed in \cite{zhao2020synthesizing}. 

\textbf{Obstacle Avoidance (OA):} We evaluate our proposed method on a controlled system~\cite{barry2012safety}. We consider an Unmanned Aerial Vehicles (UAVs) avoiding collision with a tree trunk. We model the system as a  Dubins-style \cite{dubins1957curves} aircraft model. The system state  consists of 2-D position and aircraft yaw rate $x:=[x_1, x_2, \psi]^T$. We let $u$ denote the control input to manipulate yaw rate and the dynamics defined in the supplement. 
We train the NCBF via the method proposed in \cite{zhao2020synthesizing} with $v$ assumed to be $1$ and the control law $u$ designed as
$ u=\mu_{nom}(x)=-\sin \psi+3 \cdot \frac{x_1 \cdot \sin \psi+x_2 \cdot \cos \psi}{0.5+x_1^2+x_2^2}$. 

\textbf{Spacecraft Rendezvous (SR):} We evaluate our approach on a spacecraft rendezvous problem from~\cite{jewison2016spacecraft}. A station-keeping controller is required to keep the "chaser" satellite within a certain relative distance to the "target" satellite. The state of the chaser is expressed relative to the target using linearized Clohessy–Wiltshire–Hill equations, with state $x=[p_x, p_y, p_z, v_x, v_y, v_z]^T$, control input $u=[u_x, u_y, u_z]^T$ and dynamics defined in the supplement. We train the NCBF as in~\cite{dawson2023safe}. 

\textbf{hi-ord$_8$:} We evaluate our approach on an eight-dimensional system that first appeared in \cite{abate2021fossil} to evaluate the scalability of proposed verification method.

\begin{figure}[htbp]
\centering
\begin{subfigure}{.24\textwidth}
  \centering
  \includegraphics[width = \textwidth]{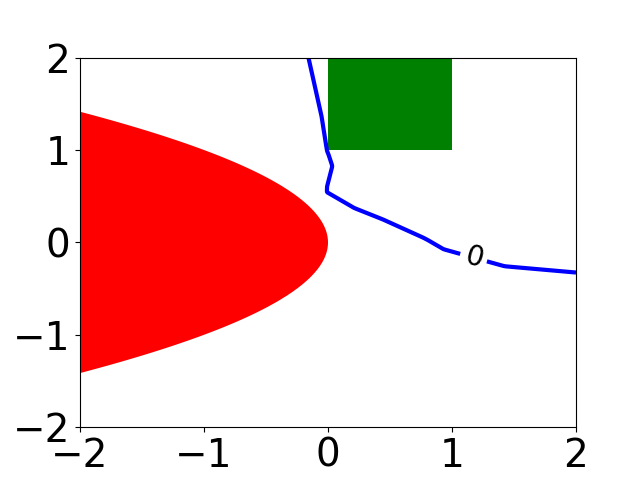}
  \subcaption{}
  \label{fig:Darboux_succ}
\end{subfigure}%
\begin{subfigure}{.24\textwidth}
  \centering
  \includegraphics[width= \textwidth]{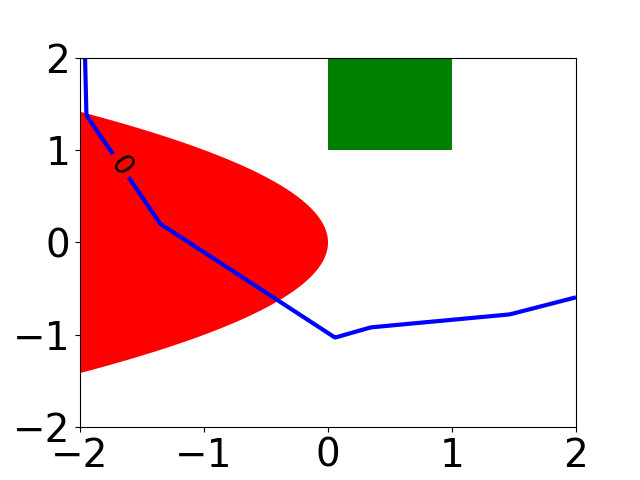}
  \subcaption{}
  \label{fig:Darboux_failure}
\end{subfigure}%
\begin{subfigure}{.24\textwidth}
  \centering
  \includegraphics[width= \textwidth]{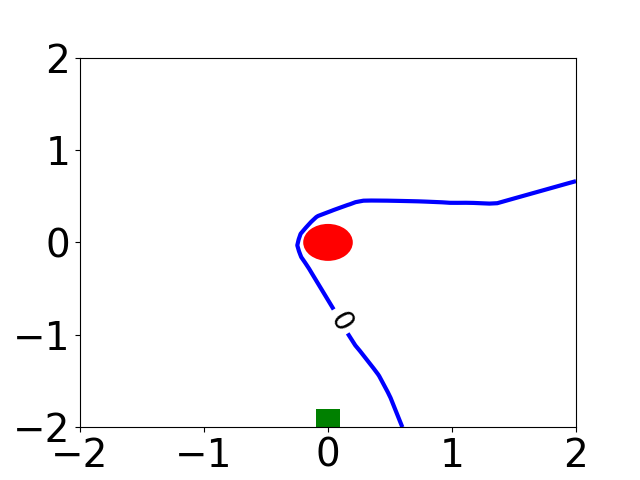}
  \subcaption{}
  \label{fig:obs_succ}
\end{subfigure}%
\begin{subfigure}{.24\textwidth}
  \centering
  \includegraphics[width= \textwidth]{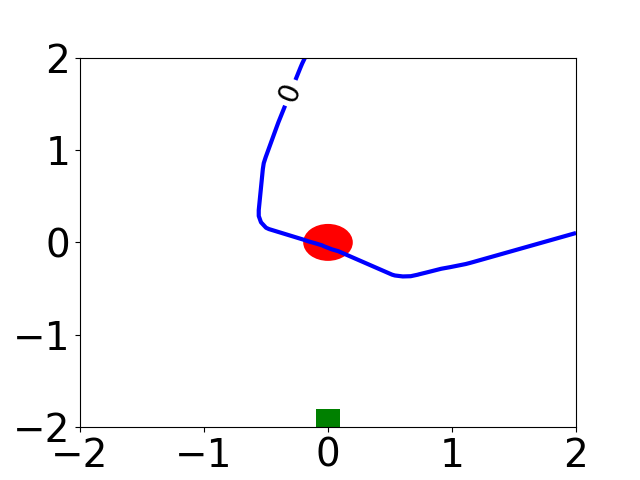}
  \subcaption{}
  \label{fig:obs_fail}
\end{subfigure}%
\caption{Comparison of NCBFs that pass and fail the proposed verification. We show $0$-level set boundary in blue, initial region in green and the unsafe region in red. (a) and (b) visualize NCBFs for Darboux. (c) and (d) shows projection of NCBFs for Obstacle Avoidance. }
\label{fig:DarbouxRes}
\end{figure}
\subsection{Experiment Results}
We first verify that $\mathcal{D} \subseteq \mathcal{C}$ in the Darboux and obstacle avoidance scenarios. 
We trained four $1$ hidden layer NCBFs with $20$, $5$, $32$, $10$ neurons, respectively. 
NCBF (a) and (c) completed training, (b) terminated after $3$ epochs and (d) terminated after $5$ epochs. Our verification algorithm identifies safety violations. NCBF (a) and (c) pass the verification while NCBF (b) and (d) fail. As shown in Fig. \ref{fig:Darboux_succ}, the fully-trained NCBF (a) separates the safe and unsafe regions while the NCBF (b) with unfinished training has a boundary that intersects the unsafe region in Fig. \ref{fig:Darboux_failure}. 
As shown in Fig. \ref{fig:obs_fail}, the 2-D projection with $\psi=-0.5$ of NCBF (d) shows its boundary overlap the unsafe region. 

We next verify the positive invariance of $\mathcal{D}$. All NCBFs for Darboux pass feasibility verification with run-time presented in Table \ref{tab:NBFcomparison}. We compare the run-time of proposed verification with SMT-based approaches using translator proposed in~\cite{abate2021fossil} and SMT solver dReal~\cite{gao2013dreal} and Z3~\cite{de2008z3}. 
The proposed method can verify NCBFs with more ReLU neurons while SMT-based verifier return nothing. 
\begin{table}[!htbp]
\caption{Comparison of verification run-time of NCBF in seconds. The table contains the dimension $n$, network architecture with $\sigma$ denoting ReLU, the number of activation sets $N$ and run-time of proposed method including time of enumerating, verification and total run-time denoted as $t_e$, $t_v$ and $T$, respectively. We compare with the run-time of dReal ($T_{dReal}$) and Z3 ($T_{Z3}$).}
\label{tab:NBFcomparison}
\centering
\begin{tabular}{lllllllll}
\toprule
Case & n & NN Architecture  & $N$ & $t_e$ & $t_v$  & $T$  & $T_{dReal}$ & $T_{Z3}$ \\ \cmidrule(r){1-9}
 \multirow{ 7}{*}{Darboux}&  2 & 2-20-$\sigma$-1         & 13 & 0.39 & 0.04 & 0.43    & 0.36 & >3hrs \\
 & 2 & 2-32-$\sigma$-1         & 20 & 0.23 & 0.03 & 0.27    & 1.27 & >3hrs \\
 & 2 & 2-64-$\sigma$-1  & 47 & 1.89 & 0.13 & 2.02    & >3hrs & >3hrs \\
 & 2 & 2-96-$\sigma$-1         & 63 & 4.17 & 0.14 & 4.31    & >3hrs & >3hrs \\
 & 2 & 2-128-$\sigma$-1        & 65 & 4.77 & 0.03 & 4.90    & >3hrs & >3hrs \\ 
 & 2 & 2-512-$\sigma$-1         & 258 & 33.44 & 0.61 & 34.05    & >3hrs & >3hrs \\
 & 2 & 2-1024-$\sigma$-1         & 548 & 107.63 & 0.81 & 108.44    & >3hrs & >3hrs \\ \cmidrule(r){1-9}
 \multirow{ 7}{*}{Darboux} & 2 & 2-10-$\sigma$-10-$\sigma$-1 & 4 & 0.42 & 0.05 & 0.47    & 7.70 & 15.81   \\ 
 & 2 & 2-16-$\sigma$-16-$\sigma$-1 & 26 & 2.13 & 0.11 & 2.23    & >3hrs & >3hrs   \\ 
 & 2 & 2-32-$\sigma$-32-$\sigma$-1 & 58 & 8.35 & 6.28 & 14.64    & >3hrs & >3hrs   \\ 
 & 2 & 2-48-$\sigma$-48-$\sigma$-1 & 58 & 13.69 & 0.19 & 13.88    & >3hrs & >3hrs   \\ 
 & 2 & 2-64-$\sigma$-64-$\sigma$-1 & 102 & 31.30 & 0.47 & 31.77  & >3hrs & >3hrs \\ 
 & 2 & 2-256-$\sigma$-256-$\sigma$-1  & 402 & 319.81 & 1.92 & 321.73    & >3hrs & >3hrs \\
 & 2 & 2-512-$\sigma$-512-$\sigma$-1 & 1150 & 619.10 & 0.75 & 619.85    & >3hrs & >3hrs \\ \cmidrule(r){1-9}
 \multirow{ 2}{*}{hi-ord$_8$}& 8 & 8-8-$\sigma$-1 & 98 & 3.70 & 0.02 & 3.72    & >3hrs & >3hrs   \\ 
 & 8 & 8-16-$\sigma$-1 & 37 & 35.05 & 0.0 & 35.05    & >3hrs & >3hrs   \\ \bottomrule
\end{tabular}
\end{table}
All of the generated NCBFs for obstacle avoidance and spacecraft rendezvous passed the verification. The run-times are presented in Table \ref{tab:NCBFcomparison}. 
We  find that the activation set sizes grow  with the size of neural network and the state dimension $n$. We conjecture that this growth rate could be reduced by using tighter approximations for the activated sets.  




A key feature of our approach is that the verification process does not depend on the control policy $\mu$, but rather we verify that any $\mu$ satisfying (\ref{eq:safety-ineq-1})--(\ref{eq:safety-ineq-3}) for all $x \in \partial \mathcal{D}$ is safe. This improves flexibility while reducing false negatives in safety verification, as we illustrate in the following example. The NCBF for obstacle avoidance with one hidden layer and $32$ neurons fails the SMT-based verification using dReal given nominal controller $\mu_{nom}$. The counter example  is at the point $(-0.20, 0, -0.73)$. At this point, we have $b(x)=0.0033$, $\frac{\partial b}{\partial x}f(x)=0.0095$, which satisfies the sufficient conditions for safety in Lemma \ref{lemma:NN-invariant}. However, the nominal controller yields  $\frac{\partial b}{\partial x}(f(x)+g(x)u)=-1.30$ and hence fails verification. Hence, by following a safe control policy of the form (\ref{eq:NCBF-control-policy}) with the learned value of $b$ and nominal policy $\mu_{nom}$, we can retain the performance  of $\mu_{nom}$ while still ensuring safety.


\begin{table}[!htbp]
\caption{Comparison of verification run-time of NCBF in seconds. We denote the run-time as `UTD' when the method is unable to be directly used for verification.} 
\label{tab:NCBFcomparison}
\centering
\begin{tabular}{lllllllll}
\toprule
Case  & n & NN Architecture  & $N$ & $t_e$ & $t_v$  & $T$ & $T_{dReal}$ & $T_{Z3}$ \\ \cmidrule(r){1-9}
\multirow{ 4}{*}{OA} & 3 & 3-32-$\sigma$-1  & 436 & 6.94 & 0.40 & 7.35 & >6hrs & >6hrs \\
 & 3 & 3-64-$\sigma$-1   & 1645 & 19.17 & 1.87 & 21.05 & >6hrs & >6hrs \\
 & 3 & 3-96-$\sigma$-1  & 3943 & 58.96 & 6.26 & 65.22 & >6hrs & >6hrs \\
  & 3 & 3-128-$\sigma$-1 & 5695 & 169.51 & 15.56 & 185.07 & >6hrs & >6hrs  \\ 
\cmidrule(r){1-9}
 \multirow{ 4}{*}{OA} & 3 & 3-16-$\sigma$-16-$\sigma$-1 & 467 & 18.72 & 2.66 & 21.39 & >6hrs & >6hrs \\ 
  & 3 & 3-32-$\sigma$-32-$\sigma$-1 & 1324 & 218.86 & 54.51 & 273.37 & >6hrs & >6hrs \\
  & 3 & 3-48-$\sigma$-48-$\sigma$-1 & 3754 & 3197.59 & 9.75 & 3207.34 & >6hrs & >6hrs \\ 
  & 3 & 3-64-$\sigma$-64-$\sigma$-1 & 6545 & 11730.28 & 18.22 & 11748.50 & >6hrs & >6hrs \\ \cmidrule(r){1-9}
\multirow{ 3}{*}{SR}
& 6 & 6-8-$\sigma$-8-$\sigma$-8-1 & 270 & 78.77 & 10.23  & 89.00 & UTD & UTD \\ 
  & 6 & 6-16-$\sigma$-16-$\sigma$-16-1 & 32937 & 1261.60 & 501.06 & 1762.67 & UTD & UTD \\ 
 & 6 & 6-32-$\sigma$-32-$\sigma$-32-1 & 207405 & 12108.11 & 1798.08 & 13906.19 & UTD & UTD \\ \bottomrule
\end{tabular}
\end{table}
The computational complexity of our approach will be determined by several factors including the dimension of the state, the number of layers, the number of neurons in each layer, and the geometry of the $0$-level set of the NCBF. As shown in Table \ref{tab:NCBFcomparison}, the dimension of the system plays the most important role. 


\section{Conclusion}
\label{sec:conclusions}
This paper studied the problem of verifying safety of a nonlinear control system using a NCBF represented by a feed-forward neural network with ReLU activation function. We leveraged a generalization of Nagumo's theorem for proving invariance of sets with nonsmooth boundaries to derive necessary and sufficient conditions for safety. The exact safety conditions addressed the issue that ReLU NCBFs will be nondifferentiable at certain points, thus invalidating traditional safety verification methods. Based on this condition, we proposed an algorithm for safety verification of NCBFs that first decomposes the NCBF into piecewise linear segments and then solves a nonlinear program to verify safety of each segment as well as the intersections of the linear segments. We mitigated the complexity by only considering the boundary of the safe region and pruning the segments with IBP and LiRPA. We evaluated our approach through numerical studies with comparison to state-of-the-art SMT-based methods. Future extensions of this work could include improving scalability for higher-dimensional systems and deeper neural networks, as well as other activation functions.

\section*{Acknowledgements}
This research was partially supported by the NSF (grants CNS-1941670, ECCS-2020289, IIS-1905558, and IIS-2214141), AFOSR (grant FA9550-22-1-0054), and ARO (grant W911NF-19-1-0241).

\bibliographystyle{unsrt}
\bibliography{ref}
\newpage
\appendix
\section{Supplementary Material}
In what follows, we give some details of content omitted in the paper due to space limit. The supplements are organized as follows. We give some proof of Lemma~\ref{lemma:activation-region}, \ref{lemma:NN-invariant}, Proposition~\ref{prop:safety-condition}, Lemma~\ref{lemma:single-activation-set-condition}, \ref{lemma:multi-activation-set-condition}, and Theorem~\ref{theorem:overall-verification-proof} in Section~\ref{proof:lemma:activation-region}
--\ref{proof:theorem:overall-verification-proof}, respectively. We present system dynamics for Darboux, obstacle avoidance, spacecraft rendezvous and hi-ord$_8$ in Section \ref{sup:darboux}--\ref{sup:hi-ord}. We provide some training details in Section \ref{subsec:training_details} as well as experiment details and results in Section \ref{subsec:experiment_details}. We compare polynomial CBFs with NCBF in \ref{sup:NCBF_polyCBF}, compare NCBFs with different activation functions in \ref{sup:cmp_activation}. We present more details in \ref{sup:ce} on the example in \ref{subsec:safety-conditions}. 

\subsection{Proof of Lemma \ref{lemma:activation-region}}
\label{proof:lemma:activation-region}
We prove by induction on $L$. If $L=1$, then $x \in \overline{\mathcal{X}}(\mathbf{S})$ if the pre-activation input to the $(1,j)$ neuron is nonnegative for all $j \in S_{1}$ and nonpositive for all $j \notin S_{1}$. We have that the pre-activation input is equal to $W_{1j}^{T}x + r_{1j}$, establishing the result for $L=1$. 

Now, inducting on $L$, we have that $x \in \overline{\mathcal{X}}(S_{1},\ldots,S_{L-1})$ if and only if
\begin{multline*}
x \in \bigcap_{i=1}^{L-1}{\left(\bigcap_{j \in S_{i}}{\{x : W_{ij}^{T}(\overline{\mathbf{W}}_{i-1}(\mathbf{S})^{T}x + \overline{\mathbf{r}}_{i-1}) + r_{ij} \geq 0\} }\right.} \\
\left.
\cap \bigcap_{j \notin S_{i}}{\{x : W_{ij}^{T}(\overline{\mathbf{W}}_{i-1}(\mathbf{S})^{T}x + \overline{\mathbf{r}}_{i-1}) + r_{ij} \leq 0\}}\right)
\end{multline*}

by induction. If $x \in \overline{\mathcal{X}}(S_{1},\ldots,S_{L-1})$, then $x \in \overline{\mathcal{X}}(S_{L})$ if and only if the pre-activation input to the $j$-th neuron at layer $L$ is nonnegative for all $j \in S_{L}$ and nonpositive for $j \notin S_{L}$. The pre-activation input is equal to $W_{Lj}^{T}z_{L-1} + r_{Lj}$, which we can expand by induction as
\begin{eqnarray*}
    W_{Lj}^{T}z_{L-1} + r_{Lj} &=& \sum_{j^{\prime}=1}^{M_{L-1}}{(W_{Lj})_{j^{\prime}}z_{L-1,j^{\prime}}} + r_{Lj} \\
    &=& \sum_{j^{\prime}=1}^{M_{L-1}}{(W_{Lj})_{j^{\prime}}(\overline{\mathbf{W}}_{L-1,j^{\prime}}(\mathbf{S})^{T}x+\overline{\mathbf{r}_{L-1,j^{\prime}}}(\mathbf{S})} + r_{Lj} \\
    &=& \left(\sum_{j^{\prime}=1}^{M_{L-1}}{(W_{Lj})_{j^{\prime}}\overline{W}_{L-1,j^{\prime}}(\mathbf{S})}\right)^{T}x + \overline{\mathbf{r}}_{Lj}(\mathbf{S}) \\
    &=& (\overline{\mathbf{W}}_{L-1}(\mathbf{S})W_{Lj})^{T}x + \overline{\mathbf{r}}_{Lj}(\mathbf{S})
\end{eqnarray*}
completing the proof.

\subsection{Proof of Lemma \ref{lemma:NN-invariant}}
\label{proof:lemma:NN-invariant}

The proof approach is based on Nagumo's Theorem, which gives necessary and sufficient conditions for positive invariance of a set. We first define the concept of tangent cone, and then present positive invariance conditions based on the tangent cone. The approach of the proof is to characterize the tangent cone to the set $\mathcal{D} = \{x: b(x) \geq 0\}$. 

\begin{definition}
\label{def:tangent-cone}
Let $\mathcal{A}$ be a closed set. The tangent cone to $\mathcal{A}$ at $x$ is defined by
\begin{equation}
\label{eq:tangent-cone}
\mathcal{T}_{\mathcal{A}}(x) = \left\{z : \liminf_{\tau \rightarrow 0}{\frac{\mbox{dist}(x+\tau z, \mathcal{A})}{\tau}} = 0\right\}
\end{equation}
\end{definition}

The following result gives an approach for constructing the tangent cone.

\begin{lemma}[\cite{blanchini2008set}]
\label{lemma:TC-special}
Suppose that the set $\mathcal{A}$ is defined by $$\mathcal{A} = \{x: q_{k}(x) \leq 0, k=1,\ldots,N\}$$ for some collection of differentiable functions $q_{1},\ldots,q_{N}$. For any $x$, let $J(x) = \{k: q_{k}(x) = 0\}$. Then $$\mathcal{T}_{\mathcal{A}}(x) = \{z: z^{T}\nabla q_{k}(x) \leq 0 \ \forall k \in J(x)\}.$$
\end{lemma}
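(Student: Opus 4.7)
The plan is to prove the two set inclusions separately, in the standard style for tangent-cone characterizations of smoothly constrained sets.

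For the inclusion $\mathcal{T}_{\mathcal{A}}(x) \subseteq \{z : z^{T}\nabla q_{k}(x) \leq 0 \ \forall k \in J(x)\}$, I would fix $z \in \mathcal{T}_{\mathcal{A}}(x)$ and use Definition~\ref{def:tangent-cone} to extract a sequence $\tau_{n} \downarrow 0$ together with points $y_{n} \in \mathcal{A}$ satisfying $\|y_{n} - (x + \tau_{n} z)\| = o(\tau_{n})$ (obtained by choosing $y_{n}$ that nearly attain the infimum defining $\mbox{dist}(x + \tau_{n} z, \mathcal{A})$). For each active constraint $k \in J(x)$, $q_{k}(y_{n}) \leq 0 = q_{k}(x)$, and a first-order Taylor expansion of $q_{k}$ about $x$ gives $q_{k}(y_{n}) = \tau_{n} \nabla q_{k}(x)^{T} z + o(\tau_{n})$. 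Dividing by $\tau_{n}$ and letting $n \to \infty$ yields $\nabla q_{k}(x)^{T} z \leq 0$, which is the claimed inclusion.

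For the reverse inclusion I would first handle the strict case where $\nabla q_{k}(x)^{T} z < 0$ for every $k \in J(x)$. A first-order expansion gives $q_{k}(x + \tau z) = \tau \nabla q_{k}(x)^{T} z + o(\tau) < 0$ for all sufficiently small $\tau > 0$, and each inactive constraint already satisfies $q_{k}(x) < 0$, which persists for small $\tau$ by continuity. Hence $x + \tau z \in \mathcal{A}$, so $\mbox{dist}(x + \tau z, \mathcal{A}) = 0$ and $z \in \mathcal{T}_{\mathcal{A}}(x)$. For the general case where some $\nabla q_{k}(x)^{T} z = 0$, I would regularize: if there exists a direction $v$ with $\nabla q_{k}(x)^{T} v < 0$ for every $k \in J(x)$, then for each $\varepsilon > 0$ the vector $z_{\varepsilon} := z + \varepsilon v$ satisfies the strict inequalities, hence $z_{\varepsilon} \in \mathcal{T}_{\mathcal{A}}(x)$ by the previous step; closedness of $\mathcal{T}_{\mathcal{A}}(x)$ and $\varepsilon \downarrow 0$ give $z \in \mathcal{T}_{\mathcal{A}}(x)$.

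The main obstacle is exactly this reverse inclusion at points where the active-constraint gradients are linearly dependent: without a constraint qualification the right-hand side can be strictly larger than the tangent cone (a standard counterexample is $q_{1}(x) = x_{1}^{2}$ at the origin), and the characterization fails. In the paper's downstream use the $q_{k}$ arising from a ReLU activation pattern are affine, so the required Mangasarian--Fromovitz-type hypothesis reduces to feasibility of a strict linear inequality system over the active face of a polyhedron, which holds at the points where the lemma is subsequently invoked and makes the stated (unqualified) form sufficient for the paper's purposes.
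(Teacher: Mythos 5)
The paper gives no proof of Lemma~\ref{lemma:TC-special}: it is quoted as a known result from \cite{blanchini2008set}, so there is no internal argument to compare yours against line by line. Your forward inclusion $\mathcal{T}_{\mathcal{A}}(x) \subseteq \{z : \nabla q_{k}(x)^{T}z \leq 0 \ \forall k \in J(x)\}$ is correct and standard. Your observation that the reverse inclusion needs a constraint qualification is also correct and worth stating: as transcribed, for arbitrary differentiable $q_{k}$ the lemma is literally false, and your example $q_{1}(x)=x_{1}^{2}$ at the origin exhibits a linearized cone strictly larger than the tangent cone.

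The gap is in how you repair the reverse inclusion. You assume a Mangasarian--Fromovitz-type condition (a direction $v$ with $\nabla q_{k}(x)^{T}v < 0$ for every active $k$) and regularize via $z_{\varepsilon} = z + \varepsilon v$. In the paper's only application (Proposition~\ref{lemma:tangent-cone-NN}) every $q_{k}$ is affine, and there the reverse inclusion holds \emph{unconditionally}: the first-order expansion is exact, $q_{k}(x+\tau z) = q_{k}(x) + \tau\,\nabla q_{k}(x)^{T}z$, so $z$ in the linearized cone gives $x+\tau z \in \mathcal{A}$ for all small $\tau > 0$ and hence $\mbox{dist}(x+\tau z,\mathcal{A}) = 0$ directly. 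By contrast, your MFCQ hypothesis can fail for a polyhedron whenever the active gradients contain an opposing pair (e.g.\ the set $\overline{\mathcal{X}}_{0}(\mathbf{S})$ lies in a hyperplane, which nothing in the paper excludes for degenerate activation regions); in that case no strictly feasible $v$ exists and your regularization step has nothing to regularize with, so precisely those cases would be left uncovered. The fix is to drop the regularization and prove the reverse inclusion in the affine case by the exact first-order identity; if you want the lemma for general differentiable $q_{k}$, the constraint qualification must be added to the hypothesis of the statement itself rather than asserted to hold downstream.
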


The following is a fundamental preliminary result for  establishing positive invariance.

\begin{theorem}[Nagumo's Theorem \cite{blanchini2008set}, Section 4.2]
\label{theorem:Nagumo}
A closed set $\mathcal{A}$ is controlled positive invariant if and only if, whenever $x(t) \in \partial \mathcal{A}$,  $u(t) \in \mathcal{U}$ satisfies
\begin{equation}
\label{eq:Nagumo-CPI}
(f(x(t)) + g(x(t))u(t)) \in \mathcal{T}_{\mathcal{A}}(x(t))
\end{equation}
\end{theorem}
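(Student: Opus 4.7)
The plan is to establish Nagumo's theorem by proving necessity and sufficiency separately, with the bulk of the work lying in the sufficiency direction. Both arguments revolve around the characterization of $\mathcal{T}_{\mathcal{A}}(x)$ in Definition~\ref{def:tangent-cone} as the set of directions $z$ along which a first-order Taylor step $x + \tau z$ can be made arbitrarily close to $\mathcal{A}$ relative to $\tau$.

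For necessity, I would suppose that $\mathcal{A}$ is controlled positive invariant, so that for each $x \in \partial \mathcal{A}$ there exists a control signal $u(\cdot)$ with $u(0) \in \mathcal{U}$ whose corresponding trajectory starting at $x$ remains in $\mathcal{A}$ for all $t \geq 0$. Setting $z = f(x) + g(x)u(0)$, a first-order Taylor expansion yields $x(\tau) = x + \tau z + o(\tau)$, and since $x(\tau) \in \mathcal{A}$ the distance from $x + \tau z$ to $\mathcal{A}$ is at most $\|x(\tau) - (x + \tau z)\| = o(\tau)$. Dividing by $\tau$ and passing to the liminf as $\tau \to 0^+$ gives $\liminf_{\tau \to 0} \mbox{dist}(x + \tau z, \mathcal{A})/\tau = 0$, so $z \in \mathcal{T}_{\mathcal{A}}(x)$.

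For sufficiency, I would use the classical Euler-approximation construction. Assume that at every $x \in \partial \mathcal{A}$ one can pick $u(x) \in \mathcal{U}$ with $f(x) + g(x)u(x) \in \mathcal{T}_{\mathcal{A}}(x)$, and at $x \in \mbox{int}(\mathcal{A})$ let $u(x)$ be arbitrary. Fix a horizon $T > 0$ and a uniform partition of $[0,T]$ with mesh $\tau$. Starting from $x_0 \in \mathcal{A}$, define iteratively $x_{k+1} = x_k + \tau(f(x_k) + g(x_k)u_k) + e_k$, where $u_k = u(x_k)$ and $e_k$ is a correction of size $o(\tau)$ chosen so that $x_{k+1} \in \mathcal{A}$. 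When $x_k \in \partial \mathcal{A}$, such a correction exists by the tangent cone characterization (which supplies a sequence of points in $\mathcal{A}$ approaching $x_k + \tau z$ faster than $\tau$), and when $x_k$ is interior, it is trivial. Interpolating these iterates produces a continuous approximant $x^{\tau}(\cdot) \subseteq \mathcal{A}$ that satisfies the dynamics up to a perturbation vanishing with $\tau$.

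The main obstacle is passing to the limit $\tau \to 0^+$ to recover a genuine solution of $\dot{x} = f(x) + g(x)u(t)$ lying in $\mathcal{A}$. Because $u(\cdot)$ derived from the selection map may be highly discontinuous, I would invoke a compactness argument: assuming $f$, $g$ continuous and $\mathcal{U}$ bounded, the family $\{x^{\tau}\}$ is uniformly bounded and equi-Lipschitz, so by Arzelà--Ascoli it admits a uniformly convergent subsequence with limit $x^{*}(\cdot) \subseteq \mathcal{A}$ (closedness of $\mathcal{A}$ is essential here). A parallel weak-$*$ compactness argument on the induced controls in $L^{\infty}([0,T], \mathcal{U})$, combined with standard limit-of-solutions results for control-affine systems, yields a measurable $u^{*}(t) \in \mathcal{U}$ under which $x^{*}$ satisfies the dynamics. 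The genuine technical subtleties, which I would treat by appealing to a Filippov--Castaing measurable selection theorem, are ensuring that the pointwise cone-inclusion can be realized by a measurable control and that the limit trajectory does not leak out of $\mathcal{A}$ at boundary-clinging intervals; once these are resolved, the resulting trajectory exists on $[0,T]$, and arbitrariness of $T$ gives controlled positive invariance.
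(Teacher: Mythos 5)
The paper does not actually prove this statement: it is imported verbatim as a known result (Nagumo's Theorem, cited to Blanchini and Miani, Section 4.2) and used as a black box in the proof of Lemma 2. So there is no in-paper argument to compare yours against; the only meaningful comparison is with the classical proof of the viability/Nagumo theorem that you are reconstructing.

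Judged on its own terms, your necessity direction is essentially right (minor caveat: $\dot{x}(0^{+}) = f(x)+g(x)u(0)$ presumes the control is right-continuous at $0$; for merely measurable controls one works with the integral form, and the same $o(\tau)$ estimate survives). The sufficiency direction, however, defers exactly the steps where the theorem is hard, and three of them are genuine gaps rather than routine technicalities. First, membership in $\mathcal{T}_{\mathcal{A}}(x_k)$ only gives $\liminf_{\tau \rightarrow 0} \mbox{dist}(x_k + \tau z, \mathcal{A})/\tau = 0$, so a correction $e_k = o(\tau)$ exists only along selected step sizes, not for a prescribed uniform mesh; the standard construction uses non-uniform partitions and must separately show the admissible steps do not shrink so fast that the total elapsed time fails to reach $T$. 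Second, passing to the limit in the term $g(x_k)u_k$ under weak-$*$ convergence of the controls requires the velocity set $\{f(x)+g(x)u : u \in \mathcal{U}\}$ to be convex; this holds in the paper's setting because $\mathcal{U} = \{u : Au \leq c\}$ is convex and the dynamics are control-affine, but it is a hypothesis that must be invoked explicitly --- without it the sufficiency direction is false, and no ``standard limit-of-solutions result'' rescues it. Third, the ``boundary-clinging'' issue you set aside is the crux of the theorem: one must show the limit trajectory cannot exit $\mathcal{A}$, typically via a contradiction argument at the first exit time using upper semicontinuity of the velocity map, and this is where the tangency hypothesis is actually consumed. Since the paper itself simply cites the theorem, the cleanest course is to do the same; a self-contained proof would need all three points filled in.
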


The following lemma characterizes the tangent cone to $\mathcal{D}$.

\begin{proposition}
\label{lemma:tangent-cone-NN}
For any $x\in \partial\mathcal{D}$, we have 
\begin{multline}
\label{eq:tangent-cone-NN}
\mathcal{T}_{\mathcal{D}}(x) = 
\bigcup_{\mathbf{S} \in \mathbf{S}(x)}{\left[\left(\bigcap_{(i,j) \in \mathbf{T}(x) \cap \mathbf{S}}{\{z: (\overline{\mathbf{W}}_{i-1}(\mathbf{S})W_{ij})^{T}z \geq 0\}}\right) \cap \right.} \\
\left. \left(\bigcap_{(i,j) \in \mathbf{T}(x) \setminus \mathbf{S}}{\{z : (\overline{\mathbf{W}}_{i-1}(\mathbf{S})W_{ij})^{T}z \leq 0\}}\right) 
\cap  \{z : \overline{W}(\mathbf{S})^{T}z \geq 0\}\right]
\end{multline}
\end{proposition}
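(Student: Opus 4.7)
The plan is to exploit the piecewise-linear structure of $b$: near any boundary point $x$, the set $\mathcal{D}$ decomposes as a finite union of polyhedral pieces, one per activation set containing $x$, and the tangent cone at $x$ is the union of the tangent cones of these pieces. I would execute this in three steps: decompose $\mathcal{D}$ locally, compute the tangent cone of each polyhedral piece via Lemma~\ref{lemma:TC-special}, and show that the tangent cone commutes with such finite unions.

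For the decomposition, I would write $\mathcal{D} = \bigcup_{\mathbf{S}} \mathcal{D}_{\mathbf{S}}$ using Lemma~\ref{lemma:activation-region}, where $\mathcal{D}_{\mathbf{S}} = \overline{\mathcal{X}}(\mathbf{S}) \cap \{x' : \overline{W}(\mathbf{S})^{T}x' + \overline{r}(\mathbf{S}) \geq 0\}$. Only pieces with $\mathbf{S} \in \mathbf{S}(x)$ contain $x$, so only they matter locally. In a small neighborhood of $x$, every constraint of $\mathcal{D}_{\mathbf{S}}$ that is strict at $x$ remains strict and hence irrelevant to the tangent cone; the tight constraints are precisely those indexed by unstable neurons $(i,j) \in \mathbf{T}(x)$ (oriented $\geq 0$ when $(i,j) \in \mathbf{S}$ and $\leq 0$ otherwise, as recorded in Lemma~\ref{lemma:activation-region}) together with the output constraint $\overline{W}(\mathbf{S})^{T}x + \overline{r}(\mathbf{S}) = b(x) = 0$.

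Applying Lemma~\ref{lemma:TC-special} to $\mathcal{D}_{\mathbf{S}}$ with these active affine constraints, whose gradients are $\pm\,\overline{\mathbf{W}}_{i-1}(\mathbf{S})W_{ij}$ and $\overline{W}(\mathbf{S})$, yields exactly the bracketed expression in the proposition as $\mathcal{T}_{\mathcal{D}_{\mathbf{S}}}(x)$; polyhedrality guarantees that the linearization cone equals the tangent cone without further constraint qualification. It remains to prove $\mathcal{T}_{\mathcal{D}}(x) = \bigcup_{\mathbf{S} \in \mathbf{S}(x)} \mathcal{T}_{\mathcal{D}_{\mathbf{S}}}(x)$. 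The inclusion $\supseteq$ is immediate because $\mbox{dist}(y, \mathcal{D}) \leq \mbox{dist}(y, \mathcal{D}_{\mathbf{S}})$ for every piece, so any direction along which one approaches some $\mathcal{D}_{\mathbf{S}}$ is also a direction of approach to $\mathcal{D}$.

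The main obstacle is the reverse inclusion $\subseteq$, and this is where the finiteness of the decomposition enters essentially. Given $z \in \mathcal{T}_{\mathcal{D}}(x)$ with witnessing sequence $\tau_k \to 0$, I would take projections $y_k \in \mathcal{D}$ of $x + \tau_k z$ with $||y_k - (x + \tau_k z)||/\tau_k \to 0$; in particular $y_k \to x$. Each $y_k$ lies in some piece $\mathcal{D}_{\mathbf{S}_{k}}$, and since the decomposition is finite, by pigeonhole a subsequence of the $y_k$ lies in a single piece $\mathcal{D}_{\mathbf{S}^{\ast}}$. That piece must contain $x$ (otherwise $y_k$ could not converge to $x$ inside a closed set not containing $x$), so $\mathbf{S}^{\ast} \in \mathbf{S}(x)$. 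Restricting to this subsequence shows $z \in \mathcal{T}_{\mathcal{D}_{\mathbf{S}^{\ast}}}(x)$, which together with the previous step establishes the claimed equality.
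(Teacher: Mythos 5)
Your proof is correct and follows essentially the same route as the paper: the same local decomposition of $\mathcal{D}$ into the polyhedral pieces $\overline{\mathcal{X}}(\mathbf{S}) \cap \mathcal{D}$ for $\mathbf{S} \in \mathbf{S}(x)$, the same application of the polyhedral tangent-cone formula (Lemma~\ref{lemma:TC-special}) with the unstable neurons and the output constraint as the active constraints, and the same identity expressing $\mathcal{T}_{\mathcal{D}}(x)$ as the union of the tangent cones of the pieces. The only difference is cosmetic: for the inclusion $\mathcal{T}_{\mathcal{D}}(x) \subseteq \bigcup_{\mathbf{S} \in \mathbf{S}(x)} \mathcal{T}_{\overline{\mathcal{X}}_{0}(\mathbf{S})}(x)$ you extract, by pigeonhole, a subsequence of near-projections lying in a single closed piece, whereas the paper argues by contradiction from the minimum of the positive liminfs over the finitely many pieces; both arguments are valid.
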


\begin{proof}
Define $\overline{\mathcal{X}}_{0}(\mathbf{S}) = \overline{\mathcal{X}}(\mathbf{S}) \cap \mathcal{D}$. We will first show that,  for all $x$ with $b(x) = 0$, 
\begin{equation}
\label{eq:tangent-cone-equality}
\mathcal{T}_{\mathcal{D}}(x) = \bigcup_{\mathbf{S} \in \mathbf{S}(\mathbf{x})}{\mathcal{T}_{\overline{\mathcal{X}}_{0}(\mathbf{S})}(x)}.
\end{equation}
We observe that $$\mbox{dist}\left(x, \mathcal{D} \setminus \bigcup_{\mathbf{S} \in \mathbf{S}(x)}{\overline{\mathcal{X}}_{0}(\mathbf{S})}\right) > 0,$$ and hence $$\mbox{dist}(x+\tau z, \mathcal{D}) = \min_{\mathbf{S} \in \mathbf{S}(x)}{\mbox{dist}(x+\tau z,\overline{\mathcal{X}}_{0}(\mathbf{S}))}$$ for $\tau$ sufficiently small.

Suppose that $z \in \mathcal{T}_{\overline{\mathcal{X}}_{0}(\mathbf{S})}(x)$. Then for any $\tau \geq 0$, $\mbox{dist}(x + \tau z, \mathcal{D}) \leq \mbox{dist}(x + \tau z, \overline{\mathcal{X}}_{0}(\mathbf{S}))$ since $\overline{\mathcal{X}}_{0}(\mathbf{S}) \subseteq \mathcal{D}$, and hence $$\liminf_{\tau \rightarrow 0}{\frac{\mbox{dist}(x + \tau z, \mathcal{D})}{\tau}} \leq \liminf_{\tau \rightarrow 0}{\frac{\mbox{dist}(x + \tau z, \overline{\mathcal{X}}_{0}(\mathbf{S}))}{\tau}} = 0.$$  We therefore have $z \in \mathcal{T}_{\mathcal{D}}(x)$. 

Now, suppose that $z \in \mathcal{T}_{\mathcal{D}}(\mathbf{x})$ and yet $z \notin \bigcup_{\mathbf{S} \in \mathbf{S}(x)}{\mathcal{T}_{\overline{\mathcal{X}}_{0}(\mathbf{S})}(x)}$. 
Then for all $\mathbf{S} \in \mathbf{S}(x)$, there exists $\epsilon_{\mathbf{S}} > 0$ such that $$\liminf_{\tau \rightarrow 0}{\frac{\mbox{dist}(x + \tau z, \overline{\mathcal{X}}_{0}(\mathbf{S}))}{\tau}} = \epsilon_{\mathbf{S}}.$$ Let $\overline{\epsilon} = \min{\{\epsilon_{\mathbf{S}} : \mathbf{S} \in \mathbf{S}(x)\}}$. For any $\delta \in (0,\overline{\epsilon})$, there exists $\overline{\tau} > 0$ such that $\tau < \overline{\tau}$ implies 
\begin{displaymath}
\frac{\mbox{dist}(x + \tau z, \mathcal{D})}{\tau} = \min_{\mathbf{S} \in \mathbf{S}(x)}{\frac{\mbox{dist}(x + \tau z, \overline{\mathcal{X}}_{0}(\mathbf{S}))}{\tau}} 
> \delta
\end{displaymath}
implying that $\liminf_{\tau \rightarrow 0}{\frac{\mbox{dist}(x+\tau z, \mathcal{D})}{\tau}} > 0$ and hence $z \notin \mathcal{T}_{\mathcal{D}}(x)$. This contradiction implies (\ref{eq:tangent-cone-equality}).

It now suffices to show that, for each $\mathbf{S} \in \mathbf{S}(x)$, 
\begin{multline*}
\mathcal{T}_{\overline{\mathcal{X}}_{0}(\mathbf{S})}(x) = \left(\bigcap_{(i,j) \in \mathbf{T}(x) \cap \mathbf{S}}{\{z : (\overline{\mathbf{W}}_{i-1}(\mathbf{S})W_{ij})^{T}z \geq 0\}}\right) \cap \\ \left(\bigcap_{(i,j) \in \mathbf{T}(x) \setminus \mathbf{S}}{\{z : (\overline{\mathbf{W}}_{i-1}(\mathbf{S})W_{ij})^{T}z \leq 0\}}\right) 
\cap \{z : \overline{W}(\mathbf{S})^{T}z \geq 0\}.
\end{multline*}
We have that each $\overline{\mathcal{X}}_{0}(\mathbf{S})$ is given by 
\begin{multline*}
\overline{\mathcal{X}}_{0}(\mathbf{S}) = \{x^{\prime} : (\overline{\mathbf{W}}_{i-1}(\mathbf{S})W_{ij})^{T}x^{\prime} + r_{ij}(\mathbf{S}) \geq 0 \ \forall (i,j) \in \mathbf{S}\} \\
\cap \{x^{\prime} : (\overline{\mathbf{W}}_{i-1}(\mathbf{S})W_{ij})^{T}x^{\prime} + r_{ij}(\mathbf{S}) \leq 0 \ \forall (i,j) \notin \mathbf{S}\} \cap \{x^{\prime} : \overline{W}(\mathbf{S})^{T}x^{\prime} + r(\mathbf{S}) \geq 0\},
\end{multline*}
thus matching the conditions of Lemma \ref{lemma:TC-special} when each $g_{k}$ function is affine. Furthermore, the set $J(x)$ is equal to the set of functions that are exactly zero at $x$, which consists of $\{(\overline{\mathbf{W}}_{i-1}(\mathbf{S})W_{ij})^{T}x + \overline{r}_{ij}(\mathbf{S}) : (i,j) \in T(\mathbf{x})\}$ together with $\overline{W}(\mathbf{S})^{T}x + \overline{r}(\mathbf{S})$. This observation combined with Lemma \ref{lemma:TC-special} gives the desired result.
\end{proof}


Lemma \ref{lemma:NN-invariant} is a consequence of Proposition \ref{lemma:tangent-cone-NN}. For ease of exposition, we first reproduce the lemma and then present the proof.

\begin{lemma}
The set $\mathcal{D}$ is positive invariant if and only if, for all $x \in \partial \mathcal{D}$, there exist $\mathbf{S} \in \mathbf{S}(x)$ and $u \in \mathcal{U}$ satisfying 
\begin{align}
(\overline{\mathbf{W}}_{i-1}(\mathbf{S})W_{ij})^{T}(f(x) + g(x)u) &\geq  0 \ \forall (i,j) \in \mathbf{T}(x) \cap \mathbf{S} \\
(\overline{\mathbf{W}}_{i-1}(\mathbf{S})W_{ij})^{T}(f(x)+g(x)u)&\leq 0 \ \forall (i,j) \in \mathbf{T}(x) \setminus \mathbf{S} \\
(\overline{\mathbf{W}}_{i-1}(\mathbf{S})W_{ij})^{T}(f(x) + g(x)u) &\geq 0 
\end{align}
\end{lemma}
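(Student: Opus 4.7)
The plan is to derive the lemma as a direct consequence of Nagumo's Theorem (Theorem~\ref{theorem:Nagumo}) once the Clarke tangent cone $\mathcal{T}_{\mathcal{D}}(x)$ to $\mathcal{D} = \{x : b(x) \geq 0\}$ at a boundary point has been explicitly described. Nagumo tells us that $\mathcal{D}$ is positive invariant under $\mu$ iff at every $x \in \partial \mathcal{D}$ the vector $f(x) + g(x)\mu(x)$ lies in $\mathcal{T}_{\mathcal{D}}(x)$; so the entire task reduces to expressing $\mathcal{T}_{\mathcal{D}}(x)$ in terms of the quantities $\overline{\mathbf{W}}_{i-1}(\mathbf{S})W_{ij}$ and $\overline{W}(\mathbf{S})$ that appear in the lemma's inequalities.

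I would first isolate the local geometry of $\mathcal{D}$ near a boundary point. On each activation region $\overline{\mathcal{X}}(\mathbf{S})$ the network output is affine in $x$, namely $\overline{W}(\mathbf{S})^{T}x + \overline{r}(\mathbf{S})$, so the piece of $\mathcal{D}$ lying in $\overline{\mathcal{X}}(\mathbf{S})$ is the polyhedron $\overline{\mathcal{X}}_{0}(\mathbf{S}) := \overline{\mathcal{X}}(\mathbf{S}) \cap \mathcal{D}$ cut out by the stability inequalities from Lemma~\ref{lemma:activation-region} together with $\overline{W}(\mathbf{S})^{T}x + \overline{r}(\mathbf{S}) \geq 0$. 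Since every $\mathbf{S}$ with $\mathbf{S} \in \mathbf{S}(x)$ contributes a local piece of $\mathcal{D}$ touching $x$, while any other $\mathbf{S}$ is separated from $x$ by a positive distance, I would argue that near $x$ one has $\mathcal{D} = \bigcup_{\mathbf{S} \in \mathbf{S}(x)} \overline{\mathcal{X}}_{0}(\mathbf{S})$ and use the definition of the tangent cone to conclude
\begin{equation*}
\mathcal{T}_{\mathcal{D}}(x) = \bigcup_{\mathbf{S} \in \mathbf{S}(x)} \mathcal{T}_{\overline{\mathcal{X}}_{0}(\mathbf{S})}(x).
\end{equation*}

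For each fixed $\mathbf{S} \in \mathbf{S}(x)$ the set $\overline{\mathcal{X}}_{0}(\mathbf{S})$ is a polyhedron defined by affine functions, so Lemma~\ref{lemma:TC-special} applies. The active constraints at $x$ are exactly those indexed by neurons in $\mathbf{T}(x)$ (with the sign depending on whether the neuron belongs to $\mathbf{S}$ or not), together with $\overline{W}(\mathbf{S})^{T}x + \overline{r}(\mathbf{S}) = 0$ (since $x \in \partial \mathcal{D}$). Reading off the gradients of these affine functions yields the three families of half-space constraints on $z$ appearing in the statement. Combining this per-$\mathbf{S}$ description with the union expression above gives Proposition~\ref{lemma:tangent-cone-NN}, and substituting $z = f(x) + g(x)\mu(x)$ into the union-of-polyhedra description of $\mathcal{T}_{\mathcal{D}}(x)$ immediately produces the necessary and sufficient conditions (\ref{eq:safety-ineq-1})--(\ref{eq:safety-ineq-3}).

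The main obstacle is the tangent cone identity $\mathcal{T}_{\mathcal{D}}(x) = \bigcup_{\mathbf{S} \in \mathbf{S}(x)} \mathcal{T}_{\overline{\mathcal{X}}_{0}(\mathbf{S})}(x)$: inclusion $\supseteq$ is easy because each $\overline{\mathcal{X}}_{0}(\mathbf{S}) \subseteq \mathcal{D}$ shortens distances, but the reverse inclusion requires a careful $\liminf$/uniform lower-bound argument showing that if $z$ is tangent to $\mathcal{D}$ but not to any individual polyhedral piece, then taking the minimum over the finitely many $\mathbf{S} \in \mathbf{S}(x)$ yields a strictly positive distance rate, contradicting $z \in \mathcal{T}_{\mathcal{D}}(x)$. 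Once this union decomposition is established rigorously, everything else is routine polyhedral tangent cone computation plus invocation of Theorem~\ref{theorem:Nagumo}.
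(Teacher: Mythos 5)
Your proposal mirrors the paper's own proof: it invokes Nagumo's Theorem, establishes the tangent-cone decomposition $\mathcal{T}_{\mathcal{D}}(x) = \bigcup_{\mathbf{S} \in \mathbf{S}(x)} \mathcal{T}_{\overline{\mathcal{X}}_{0}(\mathbf{S})}(x)$ via the same positive-distance/liminf argument for the nontrivial inclusion, and then reads off the active affine constraints with Lemma~\ref{lemma:TC-special} exactly as in Proposition~\ref{lemma:tangent-cone-NN}. The approach and all key steps coincide with the paper's argument, so the proposal is correct as sketched.
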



 \begin{proof}
 By Theorem \ref{theorem:Nagumo}, the set $\mathcal{D}$ is positive invariant if and only if for every $x \in \partial \mathcal{D}$, there exists $u$ such that $(f(x) + g(x)u) \in \mathcal{T}_{\mathcal{D}}(x)$. By Proposition \ref{lemma:tangent-cone-NN}, this condition holds iff there exists $\mathbf{S} \in \mathbf{S}(x)$ such that 
\begin{multline*}
    (f(x)+g(x)u) \in \left(\left(\bigcap_{(i,j) \in \mathbf{T}(x) \cap \mathbf{S}}{\{z: (\overline{\mathbf{W}}_{i-1}(\mathbf{S})W_{ij})^{T}z \geq 0\}}\right) \cap \right.  \\
    \left. \left(\bigcap_{(i,j) \in \mathbf{T}(x) \setminus \mathbf{S}}{\{z : (\overline{\mathbf{W}}_{i-1}(\mathbf{S})W_{ij})^{T}z \leq 0\}}\right)
    \cap  \{z : \overline{W}(\mathbf{S})^{T}z \geq 0\}\right)
\end{multline*}
The above condition is equivalent to the conditions of the lemma, completing the proof.
 \end{proof}


\subsection{Proof of Proposition \ref{prop:safety-condition}}
\label{proof:prop:safety-condition}



First, suppose  that condition (i) holds. Then for any $x \in \mathcal{D}$ with $\mathbf{S}(x) = \{\mathbf{S}_{1},\ldots,\mathbf{S}_{r}\}$, there exists $l \in \{1,\ldots,r\}$ and $u \in \mathcal{U}$ such that $x \in \overline{\mathcal{X}}(\mathbf{S}_{l})$ and  (\ref{eq:safety-ineq-1})--(\ref{eq:safety-ineq-2}) hold. For this choice of $u$, we have $(f(x) + g(x)u) \in \mathcal{T}_{\Psi_{i}}(x)$ by Proposition \ref{lemma:tangent-cone-NN}. Hence $\mathcal{D}$ is positive invariant under any control policy consistent with $b$ by Lemma \ref{lemma:NN-invariant}.

Next, suppose that condition (ii) holds. Since $\mathcal{D}$ is contained in the union of the activation sets $\overline{\mathcal{X}}(\mathbf{S})$, this condition implies that $\mathcal{D} \subseteq \mathcal{C}$. 

\subsection{Proof of Lemma \ref{lemma:single-activation-set-condition}}
\label{proof:lemma:single-activation-set-condition}



Suppose  that condition 1 holds. Then for any $x \in \partial \mathcal{D}$ with $\mathbf{S}(x) = \{\mathbf{S}_{1},\ldots,\mathbf{S}_{r}\}$, there exists $l \in \{1,\ldots,r\}$ such that $x \in \overline{\mathcal{X}}(\mathbf{S}_{l})$ and $u \in \mathcal{U}$ satisfy (\ref{eq:safety-ineq-1}) and (\ref{eq:safety-ineq-2}). For this choice of $u$, we have $(f(x) + g(x)u) \in \mathcal{T}_{\mathcal{D}}(x)$ by Proposition \ref{lemma:tangent-cone-NN}. Hence $\mathcal{D}$ is positive invariant under any control policy consistent with $b$ by Theorem \ref{theorem:Nagumo}. 

If Condition 2 holds, then there is no $x$ with $b(x) = 0$ and $x \in \mbox{int}(\overline{\mathcal{X}}(\mathbf{S}))$ such that $x \notin \mathcal{C}$. Hence, there are no counterexamples to condition (ii) of Proposition \ref{prop:safety-condition}.

\subsection{Proof of Lemma \ref{lemma:multi-activation-set-condition}}
\label{proof:lemma:multi-activation-set-condition}

The approach is to prove that condition (ii) of Proposition \ref{prop:safety-condition} holds; condition (i) holds automatically if each $\mathbf{S}_{1},\ldots,\mathbf{S}_{r}$ satisfies condition (ii) of Lemma \ref{lemma:single-activation-set-condition}. We have that conditions (a) and (b) are equivalent to $b(x) = 0$ and (\ref{eq:safety-condition-x}). In order for $x$ to be a safety counterexample, for all $l=1,\ldots,r$, at least one of Eqs. (\ref{eq:safety-ineq-1}) and (\ref{eq:safety-ineq-2}) must fail. Equivalently, for all $l=1,\ldots,r$, there does not exist $u$ satisfying
\begin{eqnarray*}
-(\overline{\mathbf{W}}_{i-1}(\mathbf{S}_{l})W_{ij})^{T}g(x)u &\leq& (\overline{\mathbf{W}}_{i-1}(\mathbf{S}_{l})W_{ij})^{T}f(x)  \ \forall (i,j) \in T(\mathbf{S}_{1},\ldots,\mathbf{S}_{r}) \cap \mathbf{S}_{l}\\
-(\overline{\mathbf{W}}_{i-1}(\mathbf{S}_{l})W_{ij})^{T}g(x)u &\geq& (\overline{\mathbf{W}}_{i-1}(\mathbf{S}_{l})W_{ij})^{T}f(x)  \ \forall (i,j) \in T(\mathbf{S}_{1},\ldots,\mathbf{S}_{r}) \setminus \mathbf{S}_{l} \\
-\overline{W}_{ij}(\mathbf{S}_{l})^{T}g(x)u &\leq& \overline{W}(\mathbf{S}_{l})^{T}f(x) \\
Au &\leq& c
\end{eqnarray*}
By Farkas Lemma, non-existence of such a $u$ is equivalent to existence of $y_{l}$ satisfying $y_{l} \geq 0$ as well as (\ref{eq:multiple-set-1}) and (\ref{eq:multiple-set-2}).

\subsection{Proof of Theorem \ref{theorem:overall-verification-proof}}
\label{proof:theorem:overall-verification-proof}
Suppose that $x$ is a safety counterexample for the NCBF $b$ with $b(x) = 0$. If $x \in \mbox{int}\overline{\mathcal{X}}(\mathbf{S})$ for some $\mathbf{S}$, then we have that $\mathbf{S} \in \mathcal{S}$ and hence a contradiction with Lemma \ref{lemma:single-activation-set-condition}. If $x \in \overline{\mathcal{X}}(\mathbf{S}_{1}) \cap \cdots \overline{\mathcal{X}}(\mathbf{S}_{r})$ for some $\mathbf{S}_{1},\ldots,\mathbf{S}_{r}$, then there is a contradiction with Lemma \ref{lemma:multi-activation-set-condition}. 

\subsection{Details on the IBP Procedure}
\label{sup:enumeration}
Interval bound propagation aims to compute an interval of possible output values by propagating a range of inputs layer-by-layer, and is integrated into our approach as follows. We first use partition the state space into cells and, for each cell, use LiRPA to derive upper and lower bounds on the value of b(x) when x takes values in that cell. When the interval of possible b(x) values in a cell contains zero, we conclude that that cell may intersect the boundary b(x) = 0. For each neuron, we use IBP to compute the pre-activation input interval for values of x within the cell. When the pre-activation input has a positive upper bound and negative lower bound, we identify the neuron as unstable, i.e., it may be either positive or negative for values of $x$ within the cell. Using this approach, we enumerate a collection of activation sets $\mathcal{S}$. We then identify the activation sets $\mathbf{S} \in \tilde{\mathcal{S}}$ such that $b(x) = 0$ for some $x \in \overline{\mathcal{X}}(\mathbf{S})$ by searching for an $x$ that satisfies the linear constraints in (16). This approach uses LiRPA and IBP to identify the activation regions that intersect the boundary $\{x: b(x) = 0\}$ without enumerating and checking all possible activation sets, which would have exponential runtime in the number of neurons in the network.

\subsection{Nonlinear Programming}
\label{subsec:nonlinear_prog}
The condition 2 of Lemma \ref{lemma:single-activation-set-condition} suffices to solve the nonlinear program
\begin{equation}
\label{eq:containment-verification}
\begin{array}{ll}
\mbox{minimize} & h(x) \\
\mbox{s.t.} & \overline{W}_{ij}(\mathbf{S})^{T}x + \overline{r}_{ij}(\mathbf{S}) \geq 0 \ \forall (i,j) \in \mathbf{S} \\
& \overline{W}_{ij}(\mathbf{S})^{T}x + \overline{r}_{ij}(\mathbf{S}) \leq 0 \ \forall (i,j) \notin \mathbf{S} \\
& \overline{W}(\mathbf{S})^{T}x + \overline{r}(\mathbf{S}) = 0
\end{array}
\end{equation}
and check whether the optimal value is nonnegative (unsafe) or negative (safe).

The verification problem of Lemma \ref{lemma:multi-activation-set-condition} can then be mapped to solving the nonlinear program
\begin{equation}
\label{eq:multiple-set-nonlinear-program}
\begin{array}{ll}
\mbox{min}_{x,y_{1},\ldots,y_{r}} & \max_{l=1,\ldots,r}{\left\{y_{l}^{T}\Lambda_{l}(\mathbf{S}_{1},\ldots,\mathbf{S}_{r},x)\right\}} \\
\mbox{s.t.} & (\overline{\mathbf{W}}_{i-1}(\mathbf{S}_{1})W_{ij})^{T}x + \overline{r}_{ij}(\mathbf{S}_1) < 0 \ \forall (i,j) \notin S_{1} \cup \cdots \cup S_{r} \\
& (\overline{\mathbf{W}}_{i-1}(\mathbf{S}_{1})W_{ij})^{T}x + \overline{r}_{ij}(\mathbf{S}_{1}) > 0 \ \forall (i,j) \in S_{1} \cap \cdots \cap S_{r} \\
& (\overline{\mathbf{W}}_{i-1}(\mathbf{S}_{1})W_{ij})^{T}x + \overline{r}_{ij}(\mathbf{S}_{1}) = 0 \ \forall (i,j) \in \mathbf{T}(\mathbf{S}_{1},\ldots,\mathbf{S}_{r}) \\
& y_{l}^{T}\Theta_{l}(\mathbf{S}_{1},\ldots,\mathbf{S}_{r}(x)) = 0 \ \forall l=1,\ldots,r \\
& y_{l} \geq 0 \ \forall l=1,\ldots,r
\end{array}
\end{equation}
and checking whether the optimal value is nonnegative (safe) or negative (unsafe). 
\subsection{Experiment Settings: Darboux}
\label{sup:darboux}
We show the settings of NCBF verification for Darboux system whose dynamic is defined as  
\begin{equation}
    \left[\begin{array}{c}
    \dot{x}_1 \\
    \dot{x}_2
    \end{array}\right]=\left[\begin{array}{c}
    x_2+2 x_1 x_2 \\
    -x_1+2 x_1^2-x_2^2
    \end{array}\right].
\end{equation}
We define state space, initial region, and unsafe region as $\mathcal{X}:\left\{\mathbf{x} \in \mathbb{R}^2: x\in[-2,2]\times[-2,2] \right\}$, $\mathcal{X}_I:\left\{\mathbf{x} \in \mathbb{R}^2: 0 \leq x_1 \leq 1,1 \leq x_2 \leq 2\right\}$ and $\mathbf{x}_{U}:\left\{\mathbf{x} \in \mathbb{R}^2: x_1+x_2^2 \leq 0\right\}$ respectively. 

\subsection{Experiment Settings: Obstacle Avoidance}
\label{sup:obstacle}
We next evaluate that our proposed method on a controlled system~\cite{barry2012safety}. The system state  consists of 2-D position and aircraft yaw rate $x:=[x_1, x_2, \psi]^T$. We let $u$ denote the control input to manipulate yaw rate and define the dynamics as
\begin{equation}
    \left[\begin{array}{c}
    \dot{x}_1 \\
    \dot{x}_2 \\
    \dot{\psi}
    \end{array}\right]=\left[\begin{array}{c}
    v \sin \psi \\
    v \cos \psi \\
    0
\end{array}\right]
+ \left[\begin{array}{c}
    0 \\
    0 \\
    u
\end{array}\right]. 
\end{equation}
We define the state space, initial region and unsafe region as $\mathcal{X}$, $\mathcal{X}_I$ and $\mathcal{X}_U$, respectively as 
\begin{equation}
    \begin{aligned}
    \mathcal{X}:&\left\{\mathbf{x} \in \mathbb{R}^3:x_1,x_2,\psi\in [-2,2]\times[-2,2]\times[-2,2] \right\} \\
    \mathcal{X}_I:&\left\{\mathbf{x} \in \mathbb{R}^3:-0.1 \leq x_1 \leq 0.1,-2 \leq x_2 \leq-1.8, \ -\pi/6<\psi<\pi / 6 \right\} \\
    \mathcal{X}_U:&\left\{\mathbf{x} \in \mathbb{R}^3: x_1^2+x_2^2 \leq 0.04\right\}
\end{aligned}
\end{equation}

\subsection{Experiment Settings: Spacecraft Rendezvous}
\label{sup:Spacecraft}
The state of the chaser is expressed relative to the target using linearized Clohessy–Wiltshire–Hill equations, with state $x=[p_x, p_y, p_z, v_x, v_y, v_z]^T$, control input $u=[u_x, u_y, u_z]^T$ and dynamics defined as follows. 
\begin{equation}
    \left[\begin{array}{c}
    \dot{p}_x \\
    \dot{p}_y \\
    \dot{p}_z \\
    \dot{v}_x \\
    \dot{v}_y \\
    \dot{v}_z
    \end{array}\right]=
    \left[\begin{array}{c c c c c c}
    1 &0 &0 &0 &0 &0 \\
    0 &1 &0 &0 &0 &0 \\
    0 &0 &1 &0 &0 &0 \\
    3n^2 &0 &0 &0 &2n &0 \\
    0 &0 &0 &-2n &0 &0 \\
    0 &0 &-n^2 &0 &0 &0 
    \end{array}\right]
    \left[\begin{array}{c}
    p_x \\
    p_y \\
    p_z \\
    v_x \\
    v_y \\
    v_z
    \end{array}\right]
+ \left[\begin{array}{c c c}
    0 &0 &0 \\
    0 &0 &0 \\
    0 &0 &0 \\
    1 &0 &0 \\
    0 &1 &0 \\
    0 &0 &1
\end{array}\right]
\left[\begin{array}{c}
    u_x \\
    u_y \\
    u_z 
    \end{array}\right]. 
\end{equation}
We define the state space and unsafe region as $\mathcal{X}$ and $\mathcal{X}_U$, respectively as 
\begin{equation}
    \begin{aligned}
    \mathcal{X}:&\left\{\mathbf{x} \in \mathbb{R}^3: p, v, \in [-1.5,1.5]\times[-1.5,1.5]\right\} \\
    \mathcal{X}_U:&\left\{ 0.25 \leq r \leq 1.5\text{, where } r=\sqrt{p_x^2+ p_y^2+ p_z^2} \right\}
\end{aligned}
\end{equation}
We obtain the trained NCBF with neural CLBF training in~\cite{dawson2023safe} with a nominal model predictive controller. 

\subsection{Experiment Settings: hi-ord\texorpdfstring{$_8$}{\_8}}
The dynamic model of the system is captured by an ODE as follows. 
\label{sup:hi-ord}
\begin{equation}
    x^{(8)} +20 x^{(7)}+170 x^{(6)}+800 x^{(5)}+2273 x^{(4)} +3980 x^{(3)}+4180 x^{(2)}+2400 x^{(1)}+576=0
\end{equation}
where we denote the $i$-th derivative of variable $x$ by $x^{(i)}$. 
We define the state space and unsafe region as $\mathcal{X}$ and $\mathcal{X}_U$, respectively as 
\begin{equation}
\begin{aligned}
    \mathcal{X}:&\left\{x_1^2+\ldots+x_8^2 \leq 4\right\} \\
    \mathcal{X}_U:&\left\{\left(x_1+2\right)^2+\ldots+\left(x_8+2\right)^2 \leq 0.16\right\}
\end{aligned}
\end{equation}
We obtain the trained NCBFs with training method proposed in \cite{abate2021fossil}. 

\subsection{Training Details}
\label{subsec:training_details}
We trained the NCBFs for Darboux and obstacle avoidance via the approach proposed in \cite{zhao2020synthesizing}. Models are trained with their open source code~\footnote{\url{https://github.com/zhaohj2017/HSCC20-Repeatability}} with default settings. Detailed parameters for both cases listed in Table \ref{tab:hyperpara_NBF}. 

We then trained NCBFs for Spacecraft Rendezvous by following approach proposed in \cite{dawson2023safe} with the empirical loss defined in Eq. (5) in \cite{dawson2022safe}. Models are trained with their open source code~\footnote{\url{https://github.com/MIT-REALM/neural_clbf}} with default settings. The hyper-parameters are listed in Table \ref{tab:hyperpara_NCBF}. 



\begin{table}[!htbp]
\centering
\label{tab:hyperpara}
\caption{Hyper-parameters for training NCBFs to be verified}
\begin{subtable}[]{0.45\textwidth}

\caption{Hyper-parameters for Darboux and OA}
\begin{tabular}{ll}
\toprule
Hyper-Parameters & Value \\ \midrule
LEARNING\_RATE & 0.01  \\
LOSS\_OPT\_FLAG & 1e-16 \\
TOL\_MAX\_GRAD & 6 \\
EPOCHS & 5 \\
TOL\_INIT & 0.0  \\
TOL\_SAFE & 0.0 \\
TOL\_BOUNDARY & 0.05 \\
TOL\_LIE & 0.0 \\
TOL\_NORM\_LIE & 0.0 \\
WEIGHT\_LIE & 1 \\
WEIGHT\_NORM\_LIE & 0 \\
DECAY\_LIE & 1 \\
DECAY\_INIT & 1 \\
DECAY\_UNSAFE & 1 \\
\bottomrule               
\end{tabular}
\label{tab:hyperpara_NBF}
\end{subtable}
\begin{subtable}[h]{0.45\textwidth}

\caption{Hyper-parameters for Spacecraft Randezvous}
\begin{tabular}{ll}
\toprule
Hyper-Parameters & Value \\ \midrule
LEARNING\_RATE & 0.01  \\
BATCH\_SIZE & 512\\
CONTROLLER\_PERIOD & 0.01\\
SIMULATION\_DT & 0.01\\
CBF\_HIDDEN\_LAYERS&1\\
CBF\_HIDDEN\_SIZE&16\\
CBF\_LAMBDA&0.1\\
CBF\_RELAXATION\_PEN & 1e4\\
SCALE\_PARAMETER&10.0\\
PRIMAL\_LEARNING\_RATE & 1e-3\\
LEARN\_SHAPE\_EPOCHS&100\\
\bottomrule
\end{tabular}
\label{tab:hyperpara_NCBF}
\end{subtable}
\end{table}

\subsection{Experiment Details and Results}
\label{subsec:experiment_details}
We use translators and verifiers proposed in FOSSIL\footnote{\url{https://github.com/oxford-oxcav/fossil}} for SMT-based verification with solver dReal and Z3 as baselines.  Our proposed enumerating algorithm utilize auto-LiRPA~\footnote{\url{https://github.com/Verified-Intelligence/auto_LiRPA}} with default settings and linear program with HiGHS solvers provided by SciPy~\footnote{\url{https://docs.scipy.org/doc/scipy-1.10.1/reference/optimize.linprog-highs.html}}. Detailed settings can be found in our attached code. 

We further visualize the trend of the number of activation sets and run-time with respect to the total number of neurons with ReLU activation function in Fig. \ref{fig:trend}. We can find that the logarithm of the activation sets size grows with the size of the neural network. The dimensionality of the state is the dominant factor in determining the run-time. The logarithm of the run-time is determined by both the state dimension and the number of ReLU hidden layers. The potential result can be cause by loose activation set estimation. Methods deriving tighter bounds than IBP may mitigate the influence of the ReLU hidden layers. 
\begin{figure}[!htbp]
\centering
\begin{subfigure}{.45\textwidth}
  \centering
  \includegraphics[width = \textwidth]{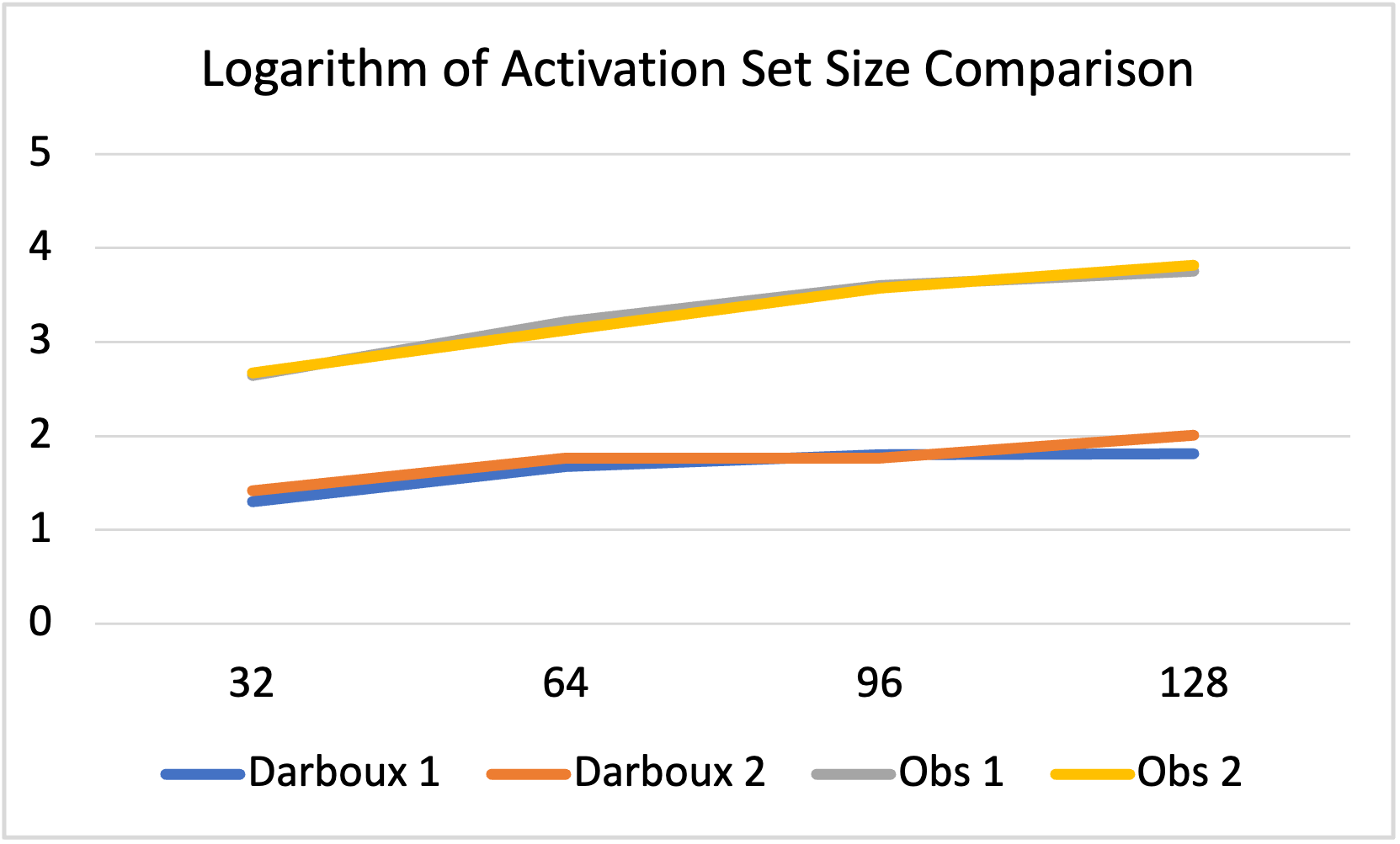}
  \subcaption{}
  \label{fig:num_act}
\end{subfigure}%
\begin{subfigure}{.45\textwidth}
  \centering
  \includegraphics[width= \textwidth]{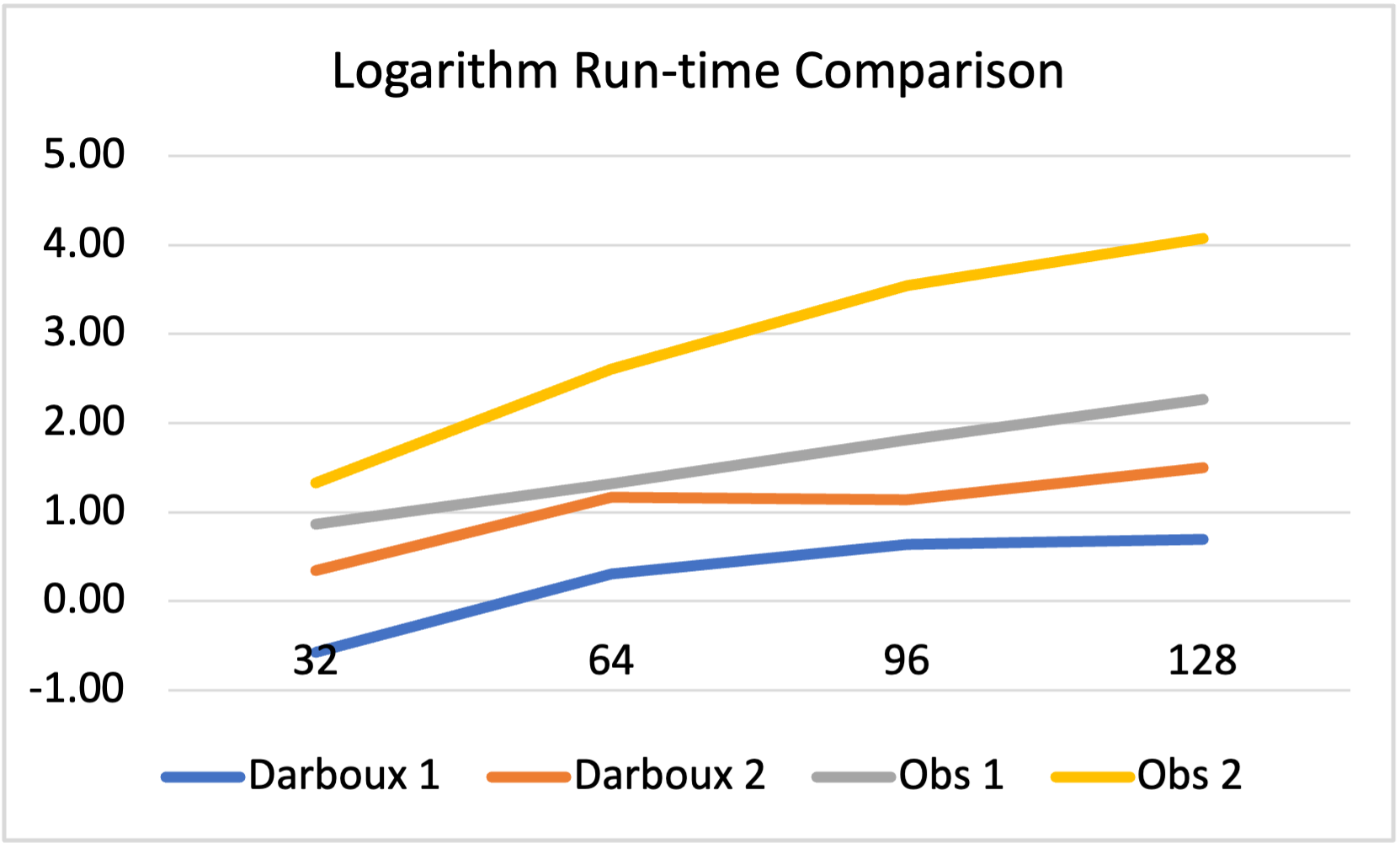}
  \subcaption{}
  \label{fig:runtime}
\end{subfigure}%
\caption{Comparison of the number of activation sets and run-time with respect to number of neurons in total. (a) shows logarithm of activation set size. (b) shows logarithm of run-time. We denote NCBFs for Darboux with 1 and 2 hidden layers as Darboux 1 and Darboux 2, respectively. We denote NCBFs for obstacle avoidance with 1 and 2 hidden layers as obs 1 and obs 2, respectively.}
\label{fig:trend}
\end{figure}

\subsection{Comparison of NCBF and SOS-based Synthesis}
\label{sup:NCBF_polyCBF}
We compare NCBF with traditional SOS synthesized polynomial CBF for the obstacle avoidance case study in two aspects, namely, training time $T_t$ and volume $V$ of the guaranteed safe region. In order to synthesize the polynomial CBF, we adopt the procedure introduced in \cite{prajna2005sostools}. This procedure first constructs a nominal controller $\mu(x),$ and then uses SOS programming to construct a barrier certificate for the system $\dot{x}(t) = f(x) + g(x)\mu(x)$. 
We choose $\mu(x)=-x_{3}$ as the nominal controller and synthesize CBFs of degree 2, 4, 6, 8, and 10 using the Matlab SOSTOOLS toolbox. We compared the result with an NCBF with one hidden layer of 32 neurons trained using the method proposed in \cite{zhao2020synthesizing} with the same nominal controller. 
The experiment results are shown below. The time of SOS CBF synthesis grows with the degree of the barrier function. Degree 10 CBF takes twice the time compared to NCBF. On the other hand, NCBF outperforms all SOS synthesized CBFs by having the largest safe region volume.
\begin{table}[htbp]
\caption{Comparison of the training time $T_t$ and safe region volume $V$ of a NCBF and SOS synthesized CBFs for Obstacle Avoidance}
\label{tab:SOS_NCBFcomp}
\centering
\begin{tabular}{lll}
\toprule
Types & $T_t \ (s)$ & $V \ (m^2\times deg)$   \\ \cmidrule(r){1-3}
NCBF 3-32-$\sigma$-1 & 262.89s & 37.76 \\
SOS Degree 2 & 7.36s & 16.14 \\
SOS Degree 4 & 6.65s & 13.44 \\
SOS Degree 6 & 19.88s & 31.36 \\
SOS Degree 8 & 125.10s & 25.93 \\
SOS Degree 10 & 551.31s & 19.99 \\
\bottomrule
\end{tabular}
\end{table}

\subsection{Comparison between Activation Functions}
\label{sup:cmp_activation}
We considered three case studies, namely, Darboux, obstacle avoidance, and spacecraft rendezvous. For each case study, we trained and verified three NCBFs with the same architecture (2 hidden layers of 32 neurons each) but different activation functions, namely, ReLU, sigmoid, and tanh. 
\begin{table}[htbp]
\caption{Comparison of training time $T_t$, safety region volume $V$ and verification time $T_v$ of ReLU, Sigmoid and Tanh NCBF for Darboux, Obstacle Avoidance and Spacecraft Rendezvous. ReLU NNs are verified by proposed method while others are verified by dReal and Z3. We write UTD when the method cannot be not directly used for verification}
\label{tab:Activation_comparison}
\centering
\begin{tabular}{lllllllll}
\toprule
Case & \multicolumn{3}{c}{Darboux} & \multicolumn{3}{c}{\begin{tabular}{c}Obstacle \\ Avoidance\end{tabular}}  & \multicolumn{2}{c}{\begin{tabular}{c}Spacecraft \\ Rendezvous \end{tabular}} \\
\cmidrule(r){2-4}
\cmidrule(r){5-7}
\cmidrule(r){8-9}
 & ReLU & Sigmoid & Tanh & ReLU & Sigmoid & Tanh & ReLU & Tanh \\ 
\cmidrule(r){1-9}
$T_{t}$ & 28.53s & 51.14s  & 69.49s  & 71.44s & 78.66s & 76.49s & 879.388s & 953.469s\\
$V$ & 2.27 & 1.62 & 2.8 & 4.99 & 2.17 & 3.50 & 0.20 & 0.22 \\
$T_{v}$ & 14.64 & >3hrs & >3hrs & 273.37s & >3hrs & >3hrs & 13906.19s & UTD \\
\bottomrule
\end{tabular}
\end{table}
We found that, for the Darboux and obstacle avoidance case studies, the ReLU NCBF completed training faster than both sigmoid and tanh NCBFs. The volume of the safe region was comparable for all three activation functions, with the tanh outperforming the ReLU NCBF in Darboux and the ReLU NCBF providing the largest volume for obstacle avoidance. The most significant difference between the three activation functions was at the verification stage. Our proposed method for verifying ReLU NCBFs terminated within 15 and 274 seconds in the Darboux and obstacle avoidance, respectively, while SMT-based methods did not terminate within three hours for both test cases. In the spacecraft rendezvous example, the ReLU NCBF completed training before the tanh NCBF. Moreover, while our approach verified the correctness of the ReLU NCBF within 4 hours, the tanh NCBF exhibited a safety violation.

\subsection{Example Details}
\label{sup:ce}
Consider the setting of the example in Section \ref{subsec:safety-conditions}. Let $b_{c}$ denote the NCBF defined in the example, which fails our defined safety conditions. For comparison, we trained an NCBF $b_{\theta}$ and verified it using our proposed approach. We then constructed a nominal controller $\mu_{nom}$ as a Linear Quadratic Regulator (LQR) controller that drives the system from initial point $(0,0.1)$ to the origin. We compared the trajectories arising from the optimization-based controller defined by Eq. (10) using the $b_{\theta}$ and $b_{c}$. For the unsafe NCBF $b_{c}$, the optimization-based controller is unable to satisfy the safety constraints at the boundary point $(0,1)$, resulting in a safety violation as described in the manuscript. On the other hand, while the NCBF $b_{\theta}$ contained multiple non-differentiable points, it is possible to choose $u$ to ensure safety at these points. For example, the point $(-0.19, 2.91)$ is a non-differentiable point on the boundary $b_{\theta} = 0$. There are four activation sets intersecting at this point, with corresponding values of $\frac{\partial b_{c}}{\partial x}g(x)$ given by $\{-0.0455, -0.053, -0.025, -0.033\}$. Since any control input $u$ with negative sign and sufficiently large magnitude will satisfy $\frac{\partial b_{c}}{\partial x}(f(x)+g(x)u) \geq 0$ for all of these values, this non-differentiable point does not compromise safety of the system, and the trajectory of the system constrained by $b_{\theta}$ remains in the safe region for all time.
\begin{figure}[htbp]
\centering
  \centering
  \includegraphics[width = 0.6\textwidth]{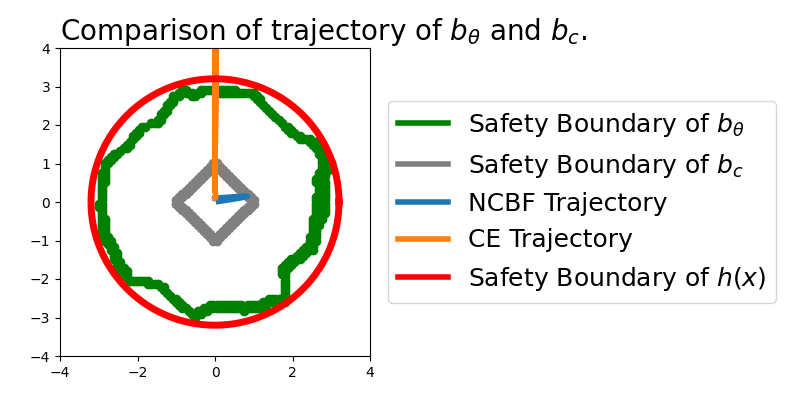}
  \label{fig:CE}
\caption{Comparison of optimization-based controller using trained NCBF $b_{\theta}$ and unsafe NCBF $b_{c}$.}
\end{figure}

\end{document}